\newcommand{\bPsi}{\boldsymbol{\Psi}}\newcommand{\bS}{\mathbf{S}}\newcommand{\bM}{\mathbf{M}}
\newcommand{\bX}{\mathbf{X}}\newcommand{\bU}{\mathbf{U}}\newcommand{\bI}{\mathbf{I}}\newcommand{\bLmd}{\boldsymbol{\Lambda}}\newcommand{\bZ}{\mathbf{Z}}\newcommand{\bo}{\mathbf{0}}\newcommand{\bx}{\mathbf{x}}\newcommand{\bz}{\mathbf{z}}\newcommand{\by}{\mathbf{y}}\newcommand{\bu}{\mathbf{u}}\def\b1{{\mathbf1}}
\newcommand{\bbE}{\mathbb{E}}
\newcommand{\btSig}{\widetilde{\boldsymbol{\Sigma}}}\newcommand{\btmu}{\widetilde{\bmu}}\newcommand{\btM}{\widetilde{\bM}}
\newcommand{\cL}{\mathcal{L}}\newcommand{\cT}{\mathcal{T}}\newcommand{\cN}{\mathcal{N}}\newcommand{\cX}{\mathcal{X}}\newcommand{\cW}{\mathcal{W}}
\newcommand{\bwSig}{\widehat{\bSig}}
 \newcommand{\bttheta}{\tilde{\btheta}}
\newcommand{\bmu}{\boldsymbol{\mu}}\newcommand{\btheta}{\boldsymbol{\theta}} 
\newcommand{\bSig}{\boldsymbol{\Sigma}}
\newcommand\refe[1]{(\ref{#1})}\newcommand\refp[1]{Proposition~\ref{#1}}\newcommand\refap[1]{\ref{#1}}
\newcommand\reff[1]{Fig.~\ref{#1}}\newcommand\reft[1]{Tab.~\ref{#1}}\newcommand\refs[1]{Sec.~\ref{#1}}
\newcommand\refa[1]{Algorithm~\ref{#1}}  
 \newcommand\refdef[1]{Definition~\ref{#1}}
\theoremstyle{plain} \newtheorem{prop}{Proposition}\newtheorem{defi}{Definition} %\linespread{1.6}
\def\R{{\mathbb R}}
\def\tr{\mbox{tr}}
\def\vc{\mbox{vec}}
\def\cov{\mbox{cov}}
\def\diag{\mbox{diag}}
\def\s2{\sigma^2}
\def\Gam{\mbox{Gam}}
\newcommand{\blind}{0}
\begin{document}

\def\spacingset#1{\renewcommand{\baselinestretch}%
	{#1}\small\normalsize} \spacingset{1}

%%%%%%%%%%%%%%%%%%%%%%%%%%%%%%%%%%%%%%%%%%%%%%%%%%%%%%%%%%%%%%%%%%%%%%%%%%%%%%

\if0\blind
{
	\title{\bf Robust factored principal component analysis for matrix-valued outlier accommodation and detection}
	\author{Xuan Ma, Jianhua Zhao\thanks{The corresponding author. Email: jhzhao.ynu@gmail.com}\hspace{.2cm} and Yue Wang \\
	School of Statistics and Mathematics, Yunnan University of Finance and Economics}
	\maketitle
} \fi
%\thanks{}\hspace{.2cm} The authors gratefully acknowledge the financial support by the National Natural Science Foundation of China under Grant 12161089, Grant 11761076, and partly by the Science Foundation of Yunnan province under Grant 2019FB002 and Grant 2021Y554. via a matrix-variate \emph{t} distribution
\if1\blind
{
	\bigskip
	\bigskip
	\bigskip
	\begin{center}
		{\LARGE\bf Robust factored principal component analysis for matrix-valued outlier accommodation and detection}
	\end{center}
	\medskip
} \fi

\bigskip

\begin{abstract}
Principal component analysis (PCA) is a popular dimension reduction technique for vector data. Factored PCA (FPCA) is a probabilistic extension of PCA for matrix data, which can substantially reduce the number of parameters in PCA while yield satisfactory performance. However, FPCA is based on the Gaussian assumption and thereby susceptible to outliers. Although the multivariate \emph{t} distribution as a robust modeling tool for vector data has a very long history, its application to matrix data is very limited. The main reason is that the dimension of the vectorized matrix data is often very high and the higher the dimension, the lower the breakdown point that measures the robustness. To solve the robustness problem suffered by FPCA and make it applicable to matrix data, in this paper we propose a robust extension of FPCA (RFPCA), which is built upon a \emph{t}-type distribution called matrix-variate \emph{t} distribution. Like the multivariate \emph{t} distribution, the matrix-variate \emph{t} distribution can adaptively down-weight outliers and yield robust estimates. We develop a fast EM-type algorithm for parameter estimation. Experiments on synthetic and real-world datasets reveal that RFPCA is compared favorably with several related methods and RFPCA is a simple but powerful tool for matrix-valued outlier detection.
\end{abstract}

\noindent
{\it Keywords:} Principal component analysis, Matrix data, Robustness, Outlier detection, EM. %Probability model, 
%\vfill
%
%\newpage
\spacingset{1.75} % DON'T change the spacing! 1.75
\section{Introduction}\label{sec:intr}
Principal component analysis (PCA) \citep{jolliffe} is a popular dimension reduction technique for vector data (where observations are vectors) and widely used in data compression, pattern recognition, computer vision and signal processing \citep{kirby-eigenface}. PCA is obtained from the eigen-decomposition of the sample covariance matrix. To apply PCA to matrix data (where observations are matrices of dimension $c\times r$) such as images, the matrices must be first vectorized as vectors. However, vectorization will destroy the natural matrix structure and may lose useful local information between rows or columns \citep{jpye-glram}. Moreover, the vectorized data dimension $d=cr$ is usually very high \citep{james-mvfa} while the number of parameters in PCA increases linearly with $d$. When $d$ is large and the sample size is relatively low, PCA is easily trapped into the \emph{curse of dimensionality}. 

To solve the high-dimensional problem, several matrix-based dimension reduction methods have been proposed. For example, bidirectional PCA (BPCA) \citep{zhangdq-2dpca}, generalized low rank approximations of matrices (GLRAM) \citep{jpye-glram} and factored PCA (FPCA) \citep{dryden-sbpca}. The common feature is that these methods use bilinear transformations, rather than the linear one in PCA. Their numbers of parameters increase linearly with the row dimension $c$ and column dimension $r$ and hence are much less than that in PCA. As a result, these methods can alleviate greatly the high-dimensional problem and significantly reduce computation costs \citep{jpye-glram}. FPCA is based on the assumption of the matrix-variate normal (matrix-normal) distribution, which is essentially the multivariate normal with a factored (or separable) covariance structure. Compared with BPCA and GLRAM, FPCA is a probability method that is able to handle the uncertainties from different sources by means of probability theory \citep{dryden-sbpca}. However, the normal assumption is sensitive to the departure of normality and may result in biased or even misleading results when data involves heavy tails or outliers \citep{lange-mvt,lin-stfa}.

For low-dimensional vector data, it has been proved that the multivariate \emph{t} distribution is effective for robust estimation \citep{lange-mvt}. Naturally, a robust PCA can be obtained by the robust covariance matrix from a multivariate \emph{t} distribution (\emph{t}PCA, in short). The extensions with the multivariate \emph{t} distribution to related models such as probabilistic PCA and factor analysis can be found in \cite{zhao2006-rpca-t,mclachlan2007extension}. 
However, the multivariate \emph{t} distribution is only applicable to vector data. When it is applied to matrix data, the dimension $d$ of the vectorized matrix data is usually very high while the upper bound of its breakdown point, that measures the robustness, is $1/d$ \citep{Dumbgen-bkd}. Consequently, the larger the value of $d$, the lower the breakdown point. This greatly limits its applicability to matrix data. 

To improve the fitting for matrix data, \cite{thompson2020classification} suggest using a \emph{t}-type distribution, which is called the matrix-variate \emph{T} (matrix-\emph{T}) distribution in \cite{gupta2013elliptically}, to replace the matrix-normal distribution, and their experimental results show that the model with a matrix-\emph{T} distribution yields better performance in classification applications. However, they fail to consider the robustness of their model. It is not clear how robust their model is in comparison with the models with the matrix-normal and particularly, multivariate \emph{t} distributions. This is one of our main concerns in this paper. For clarity, the FPCA using a matrix-\emph{T} distribution is denoted by TPCA in this paper.

To solve the robustness problems suffered by FPCA and \emph{t}PCA and make the resulting method applicable to matrix data. In this paper we propose a new robust FPCA method for matrix data, denoted as RFPCA. RFPCA uses another \emph{t}-type distribution, which is called the matrix-variate \emph{t} (matrix-\emph{t}) distribution in \cite{gupta2013elliptically}. Furthermore, we develop a fast Expectation Maximization (EM)-type algorithm to obtain the maximum likelihood estimates of model parameters. The main contributions of this paper are threefold as follows.
\begin{enumerate}[(i)]
	\item RFPCA is generally comparable with or more robust and accurate than related methods. Moreover, with our proposed fast algorithm, RFPCA can be run much faster than TPCA, especially when the sample size gets large, as will be seen in \refs{sec:eff}.  
	\item More importantly, while a rigorous theoretical result is not yet available, our empirical results show that RFPCA has a significantly higher breakdown point than its vector-based counterpart \emph{t}PCA, which makes RFPCA applicable to matrix data. To our knowledge, this important result is found for the first time. 
	\item As will be seen in \refs{sec:face.odtc}, the expected latent weights of RFPCA can be readily used for outlier detection, and they are much more reliable than those by \emph{t}PCA, due to the better robustness. In contrast, it is not clear how outlier detection can be performed for TPCA. This means that RFPCA provides a simple but powerful tool to detect potential outlying matrix observations. Furthermore, compared with the freshly proposed matrix-variate contaminated normal distribution that can only be used to detect mild outliers \citep{tomarchio2021mixtures}, RFPCA is inherently able to detect gross outliers (i.e., arbitrarily large abnormal observations), due to the use of the matrix-\emph{t} distribution. To our knowledge, our method provides the first matrix-based tool for gross matrix-valued outlier detection.
\end{enumerate}

The rest of this paper is organized as follows. In \refs{sec:rlt}, we briefly review some related works. In \refs{sec:MtPCA}, we propose RFPCA and develop its maximum likelihood estimation algorithms. In \refs{sec:expr}, we conduct a number of experiments on simulated and real-world data. In \refs{sec:Conclusion}, we close the paper with some conclusions and discussions. 

{\bf Notations}: In this paper, $\bI_c$ is an identity matrix of dimension $c \times c$. The transpose of vector $\bx$ or matrix $\bX$ is denoted by $\bx'$ or $\bX'$, the matrix trace by $\tr(\cdot)$, the vectorization by $\vc(\cdot)$ and the Kronecker product by $\otimes$.  

\section{Related works}\label{sec:rlt}
In this section, we briefly review the multivariate \emph{t}, matrix-variate normal and matrix-variate \emph{T} distributions and the PCA variants with these distributions. 

\subsection{Multivariate \emph{t} distribution and multivariate-\emph{t} PCA (\emph{t}PCA) }\label{sec:tPCA}
We first recall some fundamental results on the Gamma and Wishart distributions \citep{bishop-prml}.% for later use. 

Suppose that a random variable $\tau$ follows the Gamma distribution $\Gam(\alpha,\beta)$, i.e., $\tau\sim\Gam(\alpha,\beta)$. The expectations of $\tau$ and $\ln{\tau}$ are given by 
%\begin{equation*}
%	p(\tau)= \frac{b^a \tau^{a-1}}{\Gamma(a)}\exp\{-b \tau\},
%	\quad a > 0,b >0 \label{eqn:gamma.pdf}
%\end{equation*}
\begin{IEEEeqnarray}{rCl}
	% \nonumber % Remove numbering (before each equation)
	\bbE[\tau]&=& \frac{\alpha}\beta,\label{eqn:gam.etau}\\
	\bbE[\ln{\tau}] &=& \psi{(\alpha)}-\ln{\beta}, \label{eqn:gam.elntau}
\end{IEEEeqnarray}
where $\psi(x)=d\ln(\Gamma(x))/dx$ is the digamma function. 

Suppose that a random matrix $\bS$ follows the Wishart distribution $\cW_{d}(\bPsi,\nu)$ with the scale matrix $\bPsi$ and degrees of freedom $\nu$, i.e., $\bS\sim\cW_{d}(\bPsi,\nu)$. The expectations of $\bS$ and $\ln{|\bS|}$ are % given by
\begin{IEEEeqnarray}{rCl}
	\bbE[\bS]&=& \nu\bPsi,\label{eqn:wst.eS}\\
	\bbE[\ln{|\bS|}] &=& \psi_d{(\nu/2)}+d\ln{2}+\ln{|\bPsi|}, \label{eqn:wst.elnS}
\end{IEEEeqnarray}
where $\psi_d(x)=d\ln(\Gamma_{d}(x))/dx$ is the $d$-variate digamma function.

\subsubsection{Multivariate \emph{t} distribution}\label{sec:t}
Suppose that a $d$-dimensional random vector $\bx$ follows the multivariate \emph{t} distribution with center $\bmu\in \R^{d}$, positive definite matrix $\bSig\in \R^{d\times d}$ and degrees of freedom $\nu>0$, denoted by $t_d(\bmu,\bSig,\nu)$, then the probability density function (p.d.f.) of $\bx$ is given by
\begin{equation}\label{eqn:t.pdf}
	\hskip-1em p(\bx)=\frac{|\bSig|^{-\frac{1}{2}} \Gamma(\frac{\nu+d}{2})}{(\pi \nu)^{\frac{d}{2}}\Gamma(\frac{\nu}{2})}
	\left(1+\frac{1}{\nu}(\bx-\bmu)'\bSig^{-1}(\bx-\bmu)\right)^{-\frac{\nu+d}{2}}.
\end{equation}
If $\nu>1$, $\bmu$ is the mean, and if $\nu>2$, $\bSig\nu/(\nu-2)$ is the covariance matrix \citep{liu-tdist}. 

$\bx$ can also be represented hierarchically as two different latent variable models \citep{gupta-mvn}.

(i) \textsf{Model I}: Introduce a latent variable $\tau$ distributed as the Gamma distribution $\Gam(\nu/2,\nu/2)$, and given $\tau$, $\bx$ conditionally follows the multivariate normal distribution $\cN_d(\bmu,\bSig/\tau)$, that is, 
\begin{equation}\label{eqn:t.hrc1}
	\textsf{Model I}: \quad \bx|\tau\sim\cN_d(\bmu,\bSig/\tau),\,\, \tau\sim\Gam(\nu/2,\nu/2).
\end{equation}
%\begin{equation}\label{eqn:t.hrc1}
%	\textsf{Model I}: \quad \left\{
%	\begin{array}{l}		
%		\bx|\tau\sim\cN_d(\bmu,\bSig/\tau),\\
%		\tau\sim\Gam(\nu/2,\nu/2).
%	\end{array}
%	\right.
%\end{equation}
Under \textsf{Model I}, we have the marginal distribution $\bx\sim t_d(\bmu,\bSig,\nu)$, and the posterior distribution
\begin{equation}
	\tau|\bx\sim\Gam\left(\frac{\nu+d}{2},\frac{\nu+(\bx-\bmu)'\bSig^{-1}(\bx-\bmu)}{2}\right).
	\label{eqn:pdf.tau|x}
\end{equation}  

(ii) \textsf{Model II}: Introduce a random matrix $\bS$ distributed as the Wishart distribution $\cW_d(\bSig^{-1}, \nu+d-1)$. Given $\bS$, $\bx$ conditionally follows the multivariate normal distribution $\cN_d(\bmu,\nu\bS^{-1})$, i.e.,
\begin{equation}\label{eqn:t.hrc2}
	\textsf{Model II}: \quad \bx|\bS\sim\cN_d(\bmu,\nu\bS^{-1}),\,\, \bS\sim\cW_d(\bSig^{-1}, \nu+d-1).
\end{equation}
%\begin{equation}\label{eqn:t.hrc2}
%	\textsf{Model II}: \quad \left\{
%	\begin{array}{l}		
%		\bx|\bS\sim\cN_d(\bmu,\nu\bS^{-1}),\\
%		\bS\sim\cW_d(\bSig^{-1}, \nu+d-1).
%	\end{array}
%	\right.
%\end{equation}
It can be shown that the marginal density of $\bx$ from $\int{p(\bx|\bS)p(\bS)}d\bS$ is also the p.d.f. of $\bx$ in \refe{eqn:t.pdf}.

\subsubsection{Multivariate-\emph{t} PCA (\emph{t}PCA)}\label{sec:t.tPCA}
Consider the orthogonal linear transformation $\by=\bU'\bx$, where $\bU\in\R^{d\times q}$ and $q<d$. Let $\bSig$ be the covariance matrix of $\bx$. Then the covariance matrix of $\by$ is $\bU'\bSig\bU$. Principal component analysis (PCA) aims to find $\bU$ that maximizes the covariance in $\by$-space. This can be formulated as the optimization problem \citep{jolliffe} 
\begin{equation}
	\mathop{\hbox{max}}\nolimits_\bU\hbox{tr}\left\{\bU'\bSig\bU\right\},\,\,s.t.\, \bU'\bU=\bI_q.\label{eqn:pca.obj}
\end{equation}
Let $\{(\bu_i,\lambda_i)\}_{i=1}^{d}$ be the decreasing-ordered eigenpairs of $\bSig$. The closed form solution $\bU$ to \refe{eqn:pca.obj} is %given by
\begin{equation}
	\bU=[\bu_1,\bu_2,\dots,\bu_q],\hbox{top}\ q\ \hbox{eigenvectors of}\ \bSig,\label{eqn:pca.U}
\end{equation}
For convenience, let $\bLmd=\diag\{\lambda_1,\lambda_2,\dots,\lambda_q\}$ be the diagonal matrix formed by the corresponding eigenvalues. Under the assumption of the multivariate normal distribution, $\bSig$ is estimated by the sample covariance matrix. 

Under the assumption of the multivariate \emph{t} distribution, $\bSig$ in \refe{eqn:pca.U} can be estimated by the robust sample covariance matrix $\btSig$ given in \refe{eqn:ecme.t.S}. We call this variant multivariate-\emph{t} PCA (\emph{t}PCA). 

\subsubsection{Maximum likelihood estimation of the multivariate \emph{t} distribution}\label{sec:t.mle}
Given data $\cX=\{\bx_n\}_{n=1}^N$, the ML estimate for the multivariate \emph{t} distribution can be obtained by EM algorithm or its several variants. Below we give a sketch of a variant called expectation conditional maximization either (ECME) algorithm \citep{chuanhai_ecme}. The ECME for the multivariate \emph{t} distribution alternates the following one E-step and three CM-steps \citep{liu-tdist}.

\noindent{\bf E-step}: Treat $\cT=\{\tau_n\}_{n=1}^N$ as missing data. The log likelihood function of the complete data is  $\sum\nolimits_{n=1}^N\ln{ \{p(\bx_n|\tau_n)p(\tau_n)\}},$  and its expectation with respect to (w.r.t.) the conditional distributions $p(\tau_n|\bx_n,\btheta)$ yielding the $Q$ function (up to a constant),
\begin{IEEEeqnarray}{rCl}\label{eqn:tpca.Q}
	Q(\btheta|\cX)&=&-\frac1{2}\sum\nolimits_{n=1}^N\left\{\ln{|\bSig|}+\bbE[\tau_n|\bx_n](\bx_n-\bmu)'\bSig^{-1}(\bx_n-\bmu)\right\} \nonumber\\ 
	&& +\frac{N\nu}{2}\ln{(\frac{\nu}{2})}-N\ln{\left(\Gamma(\frac{\nu}{2})\right)} +\frac{\nu}{2}\sum\nolimits_{n=1}^N\left\{\ln{(\bbE[\tau_n|\bx_n])}-\bbE[\tau_n|\bx_n]\right\},
\end{IEEEeqnarray}
where, by \refe{eqn:pdf.tau|x} and \refe{eqn:gam.etau}, the expected weight of $\bX_n$ is computed by
% the expected weight $\tau_n$
\begin{equation}\label{eqn:Etau.x}
	\bbE[\tau_n|\bx_n]= \frac{\nu+d}{\nu+(\bx_n-\bmu)'\bSig^{-1}(\bx_n-\bmu)}.
\end{equation}
%which is the expectation of the latent weight of $\bx_n$. 
\noindent
{\bf CMQ-step 1:} Given $(\bSig, \nu)$, maximizing $Q$ in \refe{eqn:tpca.Q} w.r.t. $\bmu$, we get 
\begin{equation}\label{eqn:ecme.t.mu}
	\btmu=\frac1{\sum\nolimits_{n=1}^N\bbE[\tau_n|\bx_n]}\sum\nolimits_{n=1}^N\bbE[\tau_n|\bx_n]\bx_n. 
\end{equation}
\noindent
{\bf CMQ-step 2:} Given $(\btmu,\nu)$, maximizing $Q$ in \refe{eqn:tpca.Q} w.r.t. $\bSig$, we obtain
\begin{equation}\label{eqn:ecme.t.S}
	\btSig = \frac{1}{N}\sum\nolimits_{n=1}^N\bbE[\tau_n|\bx_n] (\bx_n-\btmu)(\bx_n-\btmu)'.
\end{equation}
\noindent
{\bf CML-step 3:} Given $(\btmu,\btSig)$, the solution of $\tilde{\nu}$ can be obtained by maximizing the observed data log likelihood $\cL=\sum\nolimits_{n=1}^N\ln{p(\bx_n)}$ w.r.t. $\nu$. 
%Repeat the above steps until the change in log likelihood $\cL$ is smaller than the given threshold. 

The multivariate \emph{t} distribution has two appealing characteristics: (i) The degrees of freedom $\nu$ is able to tune the robustness. (ii) The expected weight $\bbE[\tau_n|\bx_n]$ in \refe{eqn:Etau.x} can be used to detect outliers. We give more details related to the two characteristics in \refs{sec:mvt.odtc}.

\subsection{Matrix-variate normal distribution and factored PCA (FPCA) }\label{sec:fpca}
\subsubsection{Matrix-variate normal distribution}\label{sec:mvn}
\begin{defi}\label{def.mvn}
	The random matrix $\bX\in\R^{c\times r}$ is said to follow a matrix-variate normal distribution with mean matrix $\bM\in\R^{c\times r}$, column and row covariance matrices $\bSig_c\in\R^{c\times c}$ and $\bSig_r\in\R^{r\times r}$, respectively, denoted by $\bX\sim \cN_{c,r}(\bM, \bSig_c, \bSig_r)$, if $\vc(\bX)\sim \cN_{c\times r}(\vc(\bM), \bSig_r\otimes\bSig_c)$. The p.d.f. of $\bX$ is given by
	\begin{equation}\label{eqn:mvn.density}
		p(\bX)=(2\pi)^{-\frac{cr}{2}}|\bSig_c|^{-\frac{r}{2}}|\bSig_r|^{-\frac{c}{2}}\exp{\left\{-\frac{1}{2}\tr\left(\bSig_c^{-1}(\bX-\bM)\bSig_r^{-1}(\bX-\bM)'\right)\right\}}.
	\end{equation}
\end{defi}
From \refdef{def.mvn}, the matrix-normal distribution is essentially a multivariate normal distribution with a separable covariance matrix $\bSig=\bSig_r\otimes \bSig_c$. 

\subsubsection{Factored PCA (FPCA)}\label{sec:mn.fpca}
Under the assumption of the matrix-normal distribution, the covariance matrix is factorized into $\bSig=\bSig_r\otimes\bSig_c$. Substituting this factorization into \refe{eqn:pca.obj}, we find that the $\bU$ solution is now factorized into $\bU=\bU_r\otimes\bU_c$. Consequently, the objective function of PCA in \refe{eqn:pca.obj} degenerates into two separate sub-functions, each of which is a similar optimization problem to that in PCA,
\begin{equation}
	\mathop{\hbox{max}}\nolimits_{\bU_c}\hbox{tr}\left\{\bU'_c\bSig_c\bU_c\right\},\,\,s.t.\, \bU'_c\bU_c=\bI_{q_c},\quad \mbox{and}\quad \mathop{\hbox{max}}\nolimits_{\bU_r}\hbox{tr}\left\{\bU'_r\bSig_r\bU_r\right\},\,\,s.t.\, \bU'_r\bU_r=\bI_{q_r},\label{eqn:fpca.obj}
\end{equation}
Let $\{(\bu_{ci},\lambda_{ci})\}_{i=1}^c$ and $\{(\bu_{ri},\lambda_{ri})\}_{i=1}^r$ be the decreasing-ordered eigenpairs of $\bSig_c$ and $\bSig_r$, respectively. The closed form solutions $\bU_c$ and $\bU_r$ to \refe{eqn:fpca.obj} are given by
\begin{equation}
	\bU_c=[\bu_{c1},\dots,\bu_{cq_c}],\,\,\bU_r=[\bu_{r1},\dots,\bu_{rq_r}],\, \hbox{top}\ q_c,q_r\ \hbox{eigenvectors of}\ \bSig_c\ \hbox{and}\ \bSig_r,\label{eqn:fpca.U}
\end{equation}
Let $\bLmd_c=\diag\{\lambda_{c1},\lambda_{c2},\dots,\lambda_{cq_c}\}$ and $\bLmd_r=\diag\{\lambda_{r1},\lambda_{r2},\dots,\lambda_{rq_r}\}$ be the diagonal matrices formed by the corresponding eigenvalues. Under the assumption of the matrix-normal distribution, $\bSig_c$ and $\bSig_r$ can be estimated by $\btSig_c$ in \refe{eqn:fpca.Sc} and $\btSig_r$ in \refe{eqn:fpca.Sr}, respectively.

\subsubsection{Maximum likelihood estimation of the matrix-variate normal distribution}\label{sec:mn.mle}
Let $\btheta=(\bM, \bSig_c, \bSig_r)$. Given a dataset $\cX=\{\bX_n\}_{n=1}^N$ , then the observed data log likelihood $\cL$ is, up to a constant, 
\begin{equation*}
	\hskip-0.5em\cL(\btheta|\cX)=-\frac12\sum\nolimits_{n=1}^N\left\{r\ln{|\bSig_c|}+c\ln{|\bSig_r|}+\tr\{\bSig_c^{-1}(\bX_n-\bM)\bSig_r^{-1}(\bX_n-\bM)'\}\right\}.\label{eqn:fpca.like}
\end{equation*}
The global ML estimate of $\bM$ is obviously the sample mean $\bar{\bX}=\frac1N\sum\nolimits_{n=1}^N\bX_n$. The ML estimate of $\bSig_c$ and $\bSig_r$ can be obtained by iterative maximization of $\cL$ using a conditional maximization (CM) algorithm \citep{chuanhai_ecme}, which iterates the following two CM-steps.

\noindent{\bf CM-step 1}: Given $\bSig_r$ and $\bM=\bar{\bX}$, maximizing $\cL$ w.r.t. $\bSig_c$, we have
\begin{equation}
	\btSig_c =  \frac{1}{Nr}\sum\nolimits_{n=1}^N(\bX_n-\bar{\bX})\bSig_r^{-1}(\bX_n-\bar{\bX})'.\label{eqn:fpca.Sc}
\end{equation}

\noindent{\bf CM-step 2}: Given $\btSig_c$ and $\bM=\bar{\bX}$, maximizing $\cL$ w.r.t. $\bSig_r$ yields
\begin{equation}
	\btSig_r =  \frac{1}{Nc}\sum\nolimits_{n=1}^N(\bX_n-\bar{\bX})'\btSig_c^{-1}(\bX_n-\bar{\bX}).\label{eqn:fpca.Sr}
\end{equation}

The computational complexity analysis of the CM is given below. In CM-step 1, calculation of $\bSig_r^{-1}$ costs $O(r^3)$, and $(\bX_n-\bar{\bX})\bSig_r^{-1}(\bX_n-\bar{\bX})'$ costs $O(cr^2+c^2r)$. The total cost for $\btSig_c$ in \refe{eqn:fpca.Sc} is hence $O(N[cr^2+c^2r])+O(r^3)$. Similarly, computation of $\btSig_r$ in CM-step 2 costs $O(N[c^2r+cr^2])+O(c^3)$. Summing over all number of iterations $t$ yields the total cost $O(tN[c^2r+cr^2])+O(t[c^3+r^3])$.

\subsection{Matrix-variate \emph{T} distribution and matrix-\emph{T} PCA (TPCA)}\label{sec:TPCA}

\subsubsection{Matrix-variate \emph{T} distribution (matrix-\emph{T})}\label{sec:matrix-T}
\begin{defi}\label{def:mxvt-1}
	The random matrix $\bX\in\R^{c\times r}$ is said to follow a matrix-variate \emph{T} (matrix-T)
	distribution with center matrix $\bM\in\R^{c\times r}$, positive matrices $\bSig_c\in\R^{c\times c}$, $\bSig_r\in\R^{r\times r}$ and $\nu>0$, denoted by $\bX\sim T_{c, r}\left(\bM, \bSig_c, \bSig_r, \nu\right)$, if its p.d.f. is 
	\begin{IEEEeqnarray*}{rCl}
		p\left(\bX\right)&=& \frac{\left|\bSig_r\right|^{-\frac{c}{2}}\left|\bSig_c\right|^{-\frac{r}{2}} \Gamma_{c}\left(\frac{\nu+c+r-1}{2}\right)}{(\pi)^{\frac{c r}{2}} \Gamma_{c}\left(\frac{\nu+c-1}{2}\right)} \times\left|\bI_{c}+\bSig_c^{-1}(\bX-\bM) \bSig_r^{-1}(\bX-\bM)'\right|^{\frac{\nu+c+r-1}{2}}.\label{eqn:T.pdf}
	\end{IEEEeqnarray*}
\end{defi}
$\bX$ can be expressed as a latent variable model. Given a random matrix $\bS\sim\cW_{c}(\bSig_c^{-1},\nu+c-1)$, the matrix observation $\bX$ follows the matrix-normal distribution $\cN_{c,r}(\bM, \bS^{-1}, \bSig_r)$, that is 
\begin{equation}\label{eqn:T.hrc}
	\textsf{Model III}: \quad \bX|\bS\sim\cN_{c,r}(\bM, \bS^{-1}, \bSig_r),\,\, \bS\sim\cW_{c}(\bSig_c^{-1},\nu+c-1).
\end{equation}
%\begin{equation}\label{eqn:T.hrc}
%	\textsf{Model III}: \quad \left\{
%	\begin{array}{l}
%		\bX|\bS\sim\cN_{c,r}(\bM, \bS^{-1}, \bSig_r),\\
%		\bS\sim\cW_{c}(\bSig_c^{-1},\nu+c-1).
%	\end{array}
%	\right.
Under \textsf{Model III} in \refe{eqn:T.hrc}, we can obtain the marginal distribution $\bX\sim T_{c, r}\left(\bM, \bSig_c, \bSig_r, \nu\right)$. The covariance matrix is computed by \citep{gupta-mvn}
\begin{equation}
	\cov{(\vc(\bX))}=\frac1{\nu-2}\bSig_r\otimes\bSig_c,\label{eqn:T.covx}
\end{equation}
Furthermore, we can obtain the posterior distribution \citep{thompson2020classification}\begin{equation}
	\bS|\bX\sim\cW_{c}\left(\left[(\bX-\bM)\bSig_r^{-1}(\bX-\bM)'+\bSig_c\right]^{-1},\nu+c+r-1\right).\label{eqn:T.S|X}
\end{equation} 
By comparing \refe{eqn:T.hrc} and \refe{eqn:t.hrc2}, it can be seen that the matrix-\emph{T} distribution is an extension of the multivariate \emph{t} distribution from \textsf{Model II} in vector cases to the matrix ones, where the multivariate normal distribution is replaced by a matrix-normal distribution.

\subsubsection{Matrix-\emph{T} PCA (TPCA)}\label{sec:matrix-T.TPCA}
Similar to that of FPCA in \refs{sec:mn.fpca}, we introduce matrix-\emph{T} PCA (TPCA) below. The $\bU_c$ and $\bU_r$ solutions are obtained by substituting the factorization $\bSig=\bSig_r\otimes\bSig_c$ in \refe{eqn:T.covx} (up to a constant) under the matrix-\emph{T} distribution assumption into \refe{eqn:fpca.obj} and \refe{eqn:fpca.U}, where $\bSig_c$ and $\bSig_r$ can be estimated by $\btSig_c$ in \refe{eqn:T.Sc} and $\btSig_r$ in \refe{eqn:T.Sr}, respectively, as detailed in \refs{sec:matrix-T.mle}. We call this variant matrix-\emph{T} PCA, denoted by TPCA. 

\subsubsection{Maximum likelihood estimation of the matrix-\emph{T} distribution}\label{sec:matrix-T.mle}
Given a set of observations $\cX=\{\bX_n\}_{n=1}^N$, the ML estimate of the parameter $\btheta=(\bM, \bSig_c, \bSig_r,\nu)$ can be obtained by an ECME algorithm \citep{thompson2020classification}, which consists of the following one E-step and three CM-steps.

\noindent{\bf E-step}: Let the missing data be $\{\bS_n\}_{n=1}^N$, then the complete data log likelihood is $\sum\nolimits_{n=1}^N\ln{\{p(\bX_n|\bS_n)p(\bS_n)\}}$. Compute its expectation w.r.t. the conditional distributions $p(\bS_n|\bX_n)$ yielding
\begin{IEEEeqnarray}{rCl}\label{eqn:Tpca.Q}
	Q(\btheta|\cX)&=&N\left\{\frac{\nu+c-1}{2}\ln{|\bSig_c|}-\frac{c}{2}\ln{|\bSig_r|}-\frac{c\nu}{2}\ln{2}-\ln{\Gamma_{c}\left(\frac{\nu+c-1}{2}\right)}\right\}+\sum\nolimits_{n=1}^N\left\{\frac{r}{2}\bbE[\ln{|\bS_n|\big|\bX_n}]\right.\nonumber\\
	&&\hskip0.5em\left.-\>\frac{1}{2}\tr\{\bbE[\bS_n|\bX_n](\bX_n-\bM)\bSig_r^{-1}(\bX_n-\bM)'\} +\frac{\nu-2}{2}\bbE[\ln{|\bS_n|\big|\bX_n}]-\frac{1}{2}\tr\{\bSig_c\bbE[\bS_n|\bX_n]\}\right\},
\end{IEEEeqnarray}
where, from \refe{eqn:T.S|X}, the required expectations are computed by
\begin{IEEEeqnarray}{rCl}
	\bbE[\bS_n|\bX_n]&=&\left(\nu+c+r-1\right)\left[(\bX_n-\bM)\bSig_r^{-1}(\bX_n-\bM)'+\bSig_c\right]^{-1},\label{eqn:T.ES}\\
	\bbE[\ln{|\bS_n|\big|\bX_n}]&=&\psi_{c}\left(\frac{\nu+c+r-1}{2}\right)+c\ln{2}+\ln{\left|\frac{\bbE[\bS_n|\bX_n]}{\nu+c+r-1}\right|}.\nonumber%\label{eqn:T.ElnS}
\end{IEEEeqnarray}
Here $\psi_{c}(x)=d\ln(\Gamma_{c}(x))/dx$ is the $c$-variate digamma function.

\noindent
{\bf CMQ-step 1:} Given $(\bSig_c,\bSig_r,\nu)$, maximizing $Q$ in \refe{eqn:Tpca.Q} w.r.t. $\bM$, we obtain 
\begin{equation}
	\btM=\left(\sum\nolimits_{n=1}^N\bbE[\bS_n|\bX_n]\right)^{-1}\sum\nolimits_{n=1}^N\bbE[\bS_n|\bX_n]\bX_n.\label{eqn:T.mean}
\end{equation}

\noindent
{\bf CMQ-step 2:} Given $(\btM,\bSig_r,\nu)$, maximize $Q$ in \refe{eqn:Tpca.Q} w.r.t. $\bSig_c$ yielding
\begin{equation}
	\btSig_c^{-1} = \frac{1}{N(\nu+c-1)}\sum\nolimits_{n=1}^N\bbE[\bS_n|\bX_n].\label{eqn:T.Sc}
\end{equation}

\noindent
{\bf CMQ-step 3:} Given $(\btM,\btSig_c,\nu)$, maximize $Q$ w.r.t. $\bSig_r$ leading to
\begin{equation}
	\btSig_r = \frac{1}{Nc}\sum\nolimits_{n=1}^N(\bX_n-\btM)'\bbE[\bS_n|\bX_n](\bX_n-\btM).\label{eqn:T.Sr}
\end{equation}

\noindent
{\bf CML-step 4:} Given $(\btM,\btSig_c,\btSig_r)$, maximize the observed data log likelihood $\cL=\sum\nolimits_{n=1}^N\ln{p( \bX_n)}$ w.r.t. $\nu$. The updated $\tilde{\nu}$ is the root of the equation
\begin{equation*}
	\psi_{c}\left(\frac{\nu+c+r-1}{2}\right)-\psi_{c}\left(\frac{\nu+c-1}{2}\right)+\frac{1}{N} \sum\nolimits_{n=1}^N \ln{\left|\frac{\bbE[\bS_n|\bX_n]}{\nu+c+r-1}\right|}-\ln {\left|\frac{\sum\nolimits_{n=1}^N\bbE[\bS_n|\bX_n]}{N\left(\nu+c-1\right)}\right|}=0.
\end{equation*}
Next we analyze the computational complexity of this algorithm. The main computations lie in E-step, CM-step 1, 3 and 4. In E-step, the computation of $\bbE[\bS_n|\bX_n]$ \refe{eqn:T.ES} involves the inverse of a $c\times c$ matrix, which is $O(c^3)$, and thus the cost of E-step takes $O(N[cr^2+c^2r+c^3])$. In CM-step 1, the cost of $\sum\nolimits_{n=1}^N\bbE[\bS_n|\bX_n]\bX_n$  \refe{eqn:T.mean} is $O(Nc^2r)$. 
In CM-step 3, the computation of $\sum\nolimits_{n=1}^N(\bX_n-\btM)'\bbE[\bS_n|\bX_n](\bX_n-\btM)$ \refe{eqn:T.Sr} takes $O(N[cr^2+c^2r])$. CML-step 4 involves the computation of $\ln{\left|\bbE[\bS_n|\bX_n]\right|}$, which costs $O(Nc^3)$. Summing over all number of iterations $t$ yields the total cost  $O(tN[c^2r+cr^2+c^3])+O(t[c^3+r^3])$. 

\section{Robust FPCA (RFPCA) via the matrix-\emph{t} distribution}\label{sec:MtPCA}
In this section, we propose a robust FPCA (RFPCA) method. In \refs{sec:matrix-t},  we extend \textsf{Model I} \refe{eqn:t.hrc1} of the multivariate \emph{t} distribution to matrix cases and obtain the matrix-variate \emph{t} (matrix-\emph{t}) distribution \citep{gupta2013elliptically}. Based on the matrix-\emph{t} distribution, we propose the RFPCA in \refs{sec:MtPCA.model}. In \refs{sec:matrix-t.mle}, we develop two ML estimation algorithms for parameter estimation of RFPCA.

\subsection{Matrix-variate \emph{t} distribution (matrix-\emph{t})}\label{sec:matrix-t}
\begin{defi}\label{def.mxvt-2}
	The random matrix $\bX\in\R^{c\times r}$ is said to follow a matrix-variate \emph{t} distribution with mean matrix $\bM\in\R^{c\times r}$, column and row covariance matrices $\bSig_c\in\R^{c\times c}$ and $\bSig_r\in\R^{r\times r}$, and degrees of freedom $\nu$, denoted by $\bX\sim Mt_{c,r}(\bM, \bSig_c, \bSig_r, \nu)$, if $\vc(\bX)\sim t_{cr}(\vc(\bM), \bSig_r\otimes\bSig_c,\nu)$. The p.d.f. of $\bX$ is as follows:
	\begin{equation}\label{eqn:mvt.density}
		p(\bX)=\frac{|\bSig_c|^{-\frac{r}{2}}|\bSig_r|^{-\frac{c}{2}} \Gamma(\frac{\nu+cr}{2})}{(\pi \nu)^{\frac{cr}{2}}\Gamma(\frac{\nu}{2})}\left(1+\frac{1}{\nu}\tr\{\bSig_c^{-1}(\bX_n-\bM)\bSig_r^{-1}(\bX_n-\bM)'\}\right)^{-\frac{\nu+cr}{2}}.
	\end{equation}
\end{defi}
$\bX$ can be represented as a latent variable model. Introduce a latent weight variable $\tau$ distributed as the Gamma distribution $\Gam(\nu/2,\nu/2)$, and given $\tau$, the random matrix $\bX$ follows the matrix-normal distribution $\cN_{c,r}(\bM,\bSig_c/\tau,\bSig_r)$, that is, 
\begin{equation}\label{eqn:mvt.hrc}
	\textsf{Model IV}: \quad \bX|\tau\sim\cN_{c,r}(\bM,\bSig_c/\tau,\bSig_r),\,\, \tau\sim\Gam(\nu/2,\nu/2).
\end{equation}
%\begin{equation}\label{eqn:mvt.hrc}
%	\textsf{Model IV}: \quad \left\{
%	\begin{array}{l}
%		\bX|\tau\sim\cN_{c,r}(\bM,\bSig_c/\tau,\bSig_r),\\
%		\tau\sim\Gam(\nu/2,\nu/2).
%	\end{array}
%	\right.
%\end{equation}
Under \textsf{Model IV} in \refe{eqn:mvt.hrc}, we can obtain the marginal distribution $\bX\sim Mt_{c,r}(\bM, \bSig_c, \bSig_r, \nu)$ or $\vc(\bX)\sim t_{cr}(\vc(\bM), \bSig_r\otimes\bSig_c,\nu)$, and hence
the covariance matrix 
\begin{equation}
	\cov{(\vc(\bX))}=\frac{\nu}{\nu-2}\bSig_r\otimes\bSig_c.\label{eqn:mvt.covx}
\end{equation} 
Moreover, we can obtain the posterior distribution 
\begin{equation}
	\tau|\bx\sim\Gam\left(\frac{\nu+cr}{2},\frac{\nu+\tr\{\bSig_c^{-1}(\bX_n-\bM)\bSig_r^{-1}(\bX_n-\bM)'\}}{2}\right).
	\label{eqn:mvt.tau|X}
\end{equation} 
By comparing \refe{eqn:mvt.hrc} with \refe{eqn:t.hrc1}, it can be seen that \textsf{Model IV} is an extension from \textsf{Model I} that replaces the multivariate normal distribution with a matrix-normal distribution. Therefore, the matrix-\emph{t} distribution is a natural extension of the multivariate \emph{t} distribution to matrix cases. Although \textsf{Model I} in \refe{eqn:t.hrc1} and \textsf{Model II} in \refe{eqn:t.hrc2} lead to the same multivariate \emph{t} distribution, their extensions to matrix cases, i.e., \textsf{Model III} in \refe{eqn:T.hrc} and \textsf{Model IV} in \refe{eqn:mvt.hrc}, yield two different types of matrix-variate distributions: the matrix-\emph{T} and matrix-\emph{t} distributions. 

\subsection{The proposed robust FPCA (RFPCA)}\label{sec:MtPCA.model}
Based on the matrix-\emph{t} distribution in \refs{sec:matrix-t}, we propose a robust FPCA, denoted by RFPCA. Under the matrix-\emph{t} distribution assumption, we have the factorization given in \refe{eqn:mvt.covx}. Similar to that of FPCA in \refs{sec:mn.fpca}, substituting the factorization $\bSig=\bSig_r\otimes\bSig_c$ (up to a constant) into \refe{eqn:fpca.obj} and \refe{eqn:fpca.U}, we can obtain the $\bU_c$ and $\bU_r$ solutions of RFPCA, where $\bSig_c$ and $\bSig_r$ can be estimated by $\btSig_c$ in \refe{eqn:ecme.Sc} and $\btSig_r$ in \refe{eqn:ecme.Sr}, respectively, as detailed in \refs{sec:matrix-t.mle}. 

\subsection{Maximum likelihood estimation of the matrix-\emph{t} distribution}\label{sec:matrix-t.mle}
In order to obtain the ML estimate of the parameter $\btheta=(\bM, \bSig_c, \bSig_r, \nu)$ in the matrix-\emph{t} distribution, we develop estimation algorithms in this section. Given a set of observations $\cX=\{\bX_n\}_{n=1}^N$, from \refe{eqn:mvt.density} the observed data log likelihood $\cL$ is, up to a constant, 
\begin{IEEEeqnarray}{rCl}\label{eqn:tfpca.like}
	\cL(\btheta|\cX)&=&-\frac12\sum\nolimits_{n=1}^N\left\{(\nu+cr)\ln{(\nu+\tr\{\bSig_c^{-1}(\bX_n-\bM)\bSig_r^{-1}(\bX_n-\bM)'\})}\right.\nonumber\\ &&\hskip-0.5em\left.+\>r\ln|\bSig_c|+c\ln|\bSig_r|\right\}+N\left\{\ln{\Gamma(\frac{\nu+cr}2)}-\ln{\Gamma(\frac{\nu}2)}+\frac{\nu}2\ln{\nu}\right\}.
\end{IEEEeqnarray}
To our knowledge, it is intractable to maximize $\cL$ in \refe{eqn:tfpca.like} directly and hence we shall make use of EM-type algorithms. As is well known, the speed of the EM algorithm depends on the proportion of missing information in the complete data and the traditional EM algorithm may converge very slowly \citep{meng-aecm}. Therefore, we propose two accelerated EM-type algorithms. The first in \refs{sec:RFPCA.ECME} is an expectation conditional maximization either (ECME) algorithm \citep{chuanhai_ecme}, which involves smaller amounts of missing information in some conditional maximization (CM)-steps. The second in \refs{sec:PX-ECME} is a variant of ECME that uses the parameter expansion technique \citep{liu-PXEM} for further acceleration (PX-ECME).

\subsubsection{An ECME algorithm}\label{sec:RFPCA.ECME}
The ECME algorithm consists of one expectation step (E-step) and four CM-steps. Each CM step maximizes a subset of parameters while keeping the other parameters fixed. We divide the parameters into $\btheta_1=\bM$, $\btheta_2=\bSig_c$, $\btheta_3=\bSig_r$ and $\btheta_4=\nu$. 

\noindent
{\bf E-step:} Let $\cT=\{\tau_n\}_{n=1}^N$ be the missing data. The complete data log likelihood function is $\sum\nolimits_{n=1}^N\ln{ \{p(\bX_n|\tau_n)p(\tau_n)\}}$, and its expectation w.r.t. the conditional distribution $p(\cT|\cX,\btheta)$ is, up to a constant, %\cL_c(\btheta|\cX,\cT)=
\begin{IEEEeqnarray}{rCl}
	Q(\btheta|\cX)&=&N\left\{\frac\nu2\ln{\frac\nu2}-\ln{\Gamma(\frac\nu2)}-\frac{r}2\ln|\bSig_c|-\frac{c}2\ln|\bSig_r|\right\}+\sum\nolimits_{n=1}^N\left\{\frac\nu2(\bbE[\ln{\tau_n|\bX_n}]\right.\nonumber\\
	&&\hskip0.5em\left.-\>\bbE[\tau_n|\bX_n])-\bbE[\tau_n|\bX_n]\tr\{\bSig_c^{-1}(\bX_n-\bM)\bSig_r^{-1}(\bX_n-\bM)'\}\right\}.\label{eqn:tfpca.Q}
\end{IEEEeqnarray}
By \refe{eqn:pdf.tau|x}, \refe{eqn:gam.etau} and \refe{eqn:gam.elntau}, the required expectations are computed by
\begin{IEEEeqnarray}{rCl}
	\bbE[\tau_n|\bX_n]&=& \frac{\nu+cr}{\nu+\tr\{\bSig_c^{-1}(\bX_n-\bM)\bSig_r^{-1}(\bX_n-\bM)'\}},\label{eqn:Etau.X}\\
	\bbE[\ln\tau_n|\bX_n]&=& \psi(\frac{\nu+cr}{2})-\ln(\frac{\nu+\tr\{\bSig_c^{-1}(\bX_n-\bM)\bSig_r^{-1}(\bX_n-\bM)'\}}{2}),\nonumber
\end{IEEEeqnarray}
where $\bbE[\tau_n|\bX_n]$is the expected weight of $\bX_n$, and $\psi(x)=d\ln(\Gamma(x))/dx$ is the digamma function. % in \refe{eqn:Etau.X} 

\noindent
{\bf CMQ-step 1:} Given $(\btheta_2,\btheta_3,\btheta_4)$, maximize $Q$ in \refe{eqn:tfpca.Q} w.r.t. $\btheta_1=\bM$ yielding
\begin{equation}\label{eqn:ecme.M}
	\btM=\frac1{\sum\nolimits_{n=1}^N\bbE[\tau_n|\bX_n]}\sum\nolimits_{n=1}^N\bbE[\tau_n|\bX_n]\bX_n. 
\end{equation}

\noindent
{\bf CMQ-step 2:} Given $(\bttheta_1,\btheta_3,\btheta_4)$, maximize $Q$ in \refe{eqn:tfpca.Q} w.r.t. $\btheta_2=\bSig_c$ leading to
\begin{equation}\label{eqn:ecme.Sc}
	\btSig_c = \frac{1}{Nr}\sum\nolimits_{n=1}^N\bbE[\tau_n|\bX_n] (\bX_n-\btM)\bSig_r^{-1}(\bX_n-\btM)'.
\end{equation}

\noindent
{\bf CMQ-step 3:} Given $(\bttheta_1,\bttheta_2,\btheta_4)$, maximize $Q$ in \refe{eqn:tfpca.Q} w.r.t. $\btheta_3=\bSig_r$ yielding
\begin{equation}\label{eqn:ecme.Sr}
	\btSig_r = \frac{1}{Nc}\sum\nolimits_{n=1}^N\bbE[\tau_n|\bX_n] (\bX_n-\btM)'\btSig_c^{-1}(\bX_n-\btM).
\end{equation}

\noindent
{\bf CML-step 4:} Given $(\bttheta_1,\bttheta_2,\bttheta_3)$, maximize the log likelihood $\cL$ in \refe{eqn:tfpca.like} w.r.t. $\btheta_4=\nu$. Set the derivative of $\cL$ w.r.t. $\nu$ equal to 0. The updated $\tilde{\nu}$ can be obtained by solving the quation
\begin{equation}\label{eqn:ecme.nu}
	-\psi(\frac{\nu}2)+\ln{(\frac{\nu}2)}+1+\psi(\frac{\nu+cr}2)-\ln{(\frac{\nu+cr}2)}+\frac1N\sum\nolimits_{n=1}^N\left[\ln{\left(\frac{\nu+cr}{\nu+\tilde{\delta}_{\bX_n}}\right)}-\frac{\nu+cr}{\nu+\tilde{\delta}_{\bX_n}}\right]=0,
\end{equation}
where $\tilde{\delta}_{\bX_n}=\tr\{\btSig_c^{-1}(\bX_n-\btM)\btSig_r^{-1}(\bX_n-\btM)'\}$, and $\psi(x)=d\ln(\Gamma(x))/dx$ is the digamma function.

The complete ECME algorithm is summarized in \refa{alg:ecme}. It can be seen that all CM-steps are guaranteed to increase the log likelihood at each iteration. Under the same mild conditions as the standard EM, the ECME is guaranteed to converge to a stationary point of $\cL$ \citep{chuanhai_ecme}. 
\begin{algorithm}[htb]
	\caption{The ECME (resp. PX-ECME) algorithm for RFPCA.}
	\label{alg:ecme}
	\begin{algorithmic}[1]
		\REQUIRE Data $\cX$ and (random) initialization of $\btheta=(\bM,\bSig_c,\bSig_r,\nu)$.
		\REPEAT
		\STATE \emph{E-step:} Compute the conditional expectation $\bbE[\tau_n|\bX_n]$ via \refe{eqn:Etau.X}.
		\STATE \emph{CM-step 1:} Update $\btM$ via \refe{eqn:ecme.M}.
		\STATE \emph{CM-step 2:} Update $\btSig_c$ via \refe{eqn:ecme.Sc} (resp. \refe{eqn:px-ecme.Sc}). %\label{alg:ecme.line.Sc}	
		\STATE \emph{CM-step 3:} Update $\btSig_r$ via \refe{eqn:ecme.Sr} (resp. \refe{eqn:px-ecme.Sr}). %\label{alg:ecme.line.Sr}
		\STATE \emph{CM-step 4:} Set $\tilde{\nu}$ to be the solution of equation \refe{eqn:ecme.nu}.		
		\UNTIL{change of $\cL$ is smaller than a threshold.}
		\ENSURE $\bttheta=(\btM,\btSig_c,\btSig_r,\tilde{\nu})$.
	\end{algorithmic}
\end{algorithm}

Next, we analyze the computational complexity of \refa{alg:ecme}. The main computations lie in CM-step 2 and 3. In CM-step 2, the cost of computing $\bSig_r^{-1}$ is $O(r^3)$. Given $\bSig_r^{-1}$, computing $\bbE[\tau_n|\bX_n](\bX_n-\btM)\bSig_r^{-1}(\bX_n-\btM)'$ takes $O(cr^2+c^2r)$ time. Hence, the cost of computing $\btSig_c$ is $O(N[cr^2+c^2r])+O(r^3)$. Similarly, in CM-step 3, computing $\btSig_r$ costs $O(N[c^2r+cr^2])+O(c^3)$. Summing over all number of iterations $t$ gives the total cost $O(tN[c^2r+cr^2])+O(t[c^3+r^3])$. For clarity, \reft{tab:complex2} summarizes the main computational complexity of FPCA, TPCA and RFPCA. It can be seen that (i) RFPCA and FPCA have similar per-iteration complexity. (ii) RFPCA has lower per-iteration cost than TPCA, becuase TPCA involves the computations of the inverses and determinants of $N$ matrices of size $c\times c$, which is $O(Nc^3)$. Due to this additional cost only depending on $c$, \cite{thompson2020classification} suggest the orientation of the matrix observations $\bX_n$ in TPCA should be chosen so that $c<r$.

\begin{table}[htbp]
	\centering
	\caption{\label{tab:complex2} Computational complexity by different methods.}
	\begin{tabular}{cc}
		\toprule 
		Method        &   Computational cost   \\ \midrule	
		FPCA          &  $O(tN[c^2r+cr^2])+O(t[c^3+r^3])$\\ 
		TPCA          &  $O(tN[c^2r+cr^2+c^3])+O(t[c^3+r^3])$\\
		RFPCA  &  $O(tN[c^2r+cr^2])+O(t[c^3+r^3])$\\ \bottomrule
	\end{tabular}
\end{table}

\subsubsection{Accelerated ECME via the parameter expansion technique (PX-ECME)}\label{sec:PX-ECME}
To improve the convergence speed of traditional EM-type algorithms, \cite{liu-PXEM} propose a parameter expansion technique. This technique introduces auxiliary parameters to expand the original model while keeping the observed data model unchanged. The expanded complete data model is then used to develop EM-type algorithms. The resulting algorithm often yields faster convergence in that it can capitalize on the extra information captured by the auxiliary parameters for the complete data model, as demonstrated by \cite{liu-PXEM} in a number of applications including the multivariate $t$ distribution. In this subsection, we explore the application of this technique to the matrix-\emph{t} distribution, and develop a parameter-expanded ECME (PX-ECME) algorithm. 

Introduce an auxiliary parameter $\alpha$ and expand \textsf{Model IV} of the matrix-\emph{t} distribution \refe{eqn:mvt.hrc} as
\begin{equation}\label{eqn:mvt.hrc.PX}
	\textsf{Model V}: \quad \bX|\tau\sim\cN_{c,r}(\bM_*,\bSig_{c*}/\tau,\bSig_{r*}),\,\, \tau\sim\Gam(\nu_*/2,\nu_*/{2\alpha}).
\end{equation}
where the parameters are marked with the subscript ${}_*$ to distinguish \textsf{Model V} \refe{eqn:mvt.hrc.PX} from original \textsf{Model IV} \refe{eqn:mvt.hrc}. Under \textsf{Model V} \refe{eqn:mvt.hrc.PX}, we obtain $\bX\sim Mt_{c,r}(\bM_*, \bSig_{c*}/\alpha, \bSig_{r*}, \nu_*)$. It can be seen that $\bSig_{c*}/\alpha$ corresponds to $\bSig_c$ and $(\bM_*, \bSig_{r*}, \nu_*)$ to $(\bM,\bSig_r,\nu)$. By using this mapping, the expanded parameters can be reduced back to the original parameters at the end of each iteration. 
In this way, as detailed in \refap{sec:PX-ECME.ECME}, it can be shown that the PX-ECME can be represented as a modified ECME except that $\btSig_c$ in \refe{eqn:ecme.Sc} and $\btSig_r$ in \refe{eqn:ecme.Sr} are replaced by \refe{eqn:px-ecme.Sc} and \refe{eqn:px-ecme.Sr}, respectively,
\begin{IEEEeqnarray}{rCl}
	\btSig_c& = & \frac{\sum\nolimits_{n=1}^N \bbE[\tau_n|\bX_n] (\bX_n-\btM)\bSig_r^{-1}(\bX_n-\btM)'}{r\sum\nolimits_{n=1}^N\bbE[\tau_n|\bX_n]},\label{eqn:px-ecme.Sc}\\
	\btSig_r& = & \frac{\sum\nolimits_{n=1}^N \bbE[\tau_n|\bX_n] (\bX_n-\btM)'\btSig_c^{-1}(\bX_n-\btM)}{c\sum\nolimits_{n=1}^N\bbE[\tau_n|\bX_n]}.\label{eqn:px-ecme.Sr}
\end{IEEEeqnarray}
For clarity, the complete PX-ECME algorithm as a modified ECME algorithm is also summarized in \refa{alg:ecme}, where the differences from the ECME are indicated by resp. It can be seen from \refa{alg:ecme} that both PX-ECME and ECME have the same computational complexity.

\subsubsection{Outlier detection by RFPCA}\label{sec:mvt.odtc}
Since the matrix-\emph{t} distribution is essentially a multivariate \emph{t} distribution with a separable covariance matrix $\bSig=\bSig_r\otimes \bSig_c$, the matrix-\emph{t} distribution naturally inherits the important characteristics of the multivariate \emph{t} distribution mentioned in \refs{sec:t.mle}. Specifically,

(i) The degrees of freedom $\nu$ is able to tune the robustness. For example, when data are distributed as a matrix-variate normal distribution, it is expected the ML estimate of $\nu$ would take large values, which makes all the expected weights $\bbE[\tau_n|\bX_n]$ in \refe{eqn:Etau.X} close to 1. As a result, the matrix-\emph{t} distribution tends to be a matrix-normal distribution in this case. When the distribution of data deviates from a matrix-normal distribution, $\nu$ may take small values. 

(ii) The expected weight $\bbE[\tau_n|\bX_n]$ in \refe{eqn:Etau.X} can be used to detect outliers. 
\begin{prop}\label{prop:mvt.weight}
	Suppose that the data $\{\bX_n\}_{n=1}^N$ follow the matrix-\emph{t} distribution $Mt_{c,r}(\bM, \bSig_c, \bSig_r, \nu)$. For the ML estimate $\btheta$, the average expected weight for all $N$ matrix observations equals to 1, that is,
	\begin{equation}
		\frac1N\sum\nolimits_{n=1}^N\bbE[\tau_n|\bX_n]=1.\label{eqn:mvt.ave.tau}
	\end{equation} 
\end{prop}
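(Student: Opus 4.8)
The plan is to derive the identity \refe{eqn:mvt.ave.tau} purely from the first-order (stationarity) conditions that the ML estimate $\bttheta=(\btM,\btSig_c,\btSig_r,\tilde{\nu})$ must satisfy, rather than by manipulating the likelihood \refe{eqn:tfpca.like} directly. Write $\tilde{\delta}_n=\tr\{\btSig_c^{-1}(\bX_n-\btM)\btSig_r^{-1}(\bX_n-\btM)'\}$ for the Mahalanobis-type distance evaluated at the estimate, so that by \refe{eqn:Etau.X} the expected weight is $\bbE[\tau_n|\bX_n]=(\nu+cr)/(\nu+\tilde{\delta}_n)$. I would first record the equivalent pointwise algebraic identity $(\nu+\tilde{\delta}_n)\bbE[\tau_n|\bX_n]=\nu+cr$, which is used at the very end.

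The key step is to extract the auxiliary identity $\frac1N\sum_{n=1}^N\bbE[\tau_n|\bX_n]\tilde{\delta}_n=cr$ from the covariance update. At a stationary point of the ECME, equation \refe{eqn:ecme.Sc} holds with the conditioning matrix $\bSig_r$ replaced by its own estimate $\btSig_r$; left-multiplying both sides by $\btSig_c^{-1}$ and taking the trace gives $c=\tr(\bI_c)=\frac1{Nr}\sum_{n=1}^N\bbE[\tau_n|\bX_n]\tr\{\btSig_c^{-1}(\bX_n-\btM)\btSig_r^{-1}(\bX_n-\btM)'\}=\frac1{Nr}\sum_{n=1}^N\bbE[\tau_n|\bX_n]\tilde{\delta}_n$, which rearranges to the claimed identity. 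By the cyclic property of the trace the row-covariance update \refe{eqn:ecme.Sr} yields exactly the same identity, providing a built-in consistency check.

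To finish, I would sum the pointwise identity $\nu\,\bbE[\tau_n|\bX_n]+\tilde{\delta}_n\bbE[\tau_n|\bX_n]=\nu+cr$ over $n$, divide by $N$, and substitute $\frac1N\sum_n\bbE[\tau_n|\bX_n]\tilde{\delta}_n=cr$. The two copies of $cr$ cancel, leaving $\nu\cdot\frac1N\sum_n\bbE[\tau_n|\bX_n]=\nu$, and dividing by $\nu>0$ gives \refe{eqn:mvt.ave.tau}.

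The argument is short, and the only point requiring care — more a subtlety than a genuine obstacle — is that the trace computation relies on evaluating \refe{eqn:ecme.Sc} at a true fixed point, where the conditioning $\bSig_r$ and the returned $\btSig_c$ both coincide with the ML values simultaneously; one must therefore invoke the fixed-point characterization of $\bttheta$ rather than a single CM sweep. It is worth noting that this is the exact matrix-variate analogue of the classical weight identity for the multivariate \emph{t} distribution obtained from \refe{eqn:ecme.t.S}, with the vectorized dimension $cr$ playing the role of $d$; indeed the whole computation is inherited through the relation $\vc(\bX)\sim t_{cr}(\vc(\bM),\bSig_r\otimes\bSig_c,\nu)$ in \refdef{def.mxvt-2}.
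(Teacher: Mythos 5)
Your proof is correct and takes essentially the same route as the paper's: both evaluate the fixed-point form of the $\bSig_c$ update \refe{eqn:ecme.Sc} at the ML estimate, multiply by $\btSig_c^{-1}$ and take the trace to obtain $\frac1N\sum_{n=1}^N\bbE[\tau_n|\bX_n]\tilde{\delta}_n=cr$, then sum the pointwise identity $(\nu+\tilde{\delta}_n)\bbE[\tau_n|\bX_n]=\nu+cr$ over $n$ and cancel to conclude. Your added remarks (the consistency check via \refe{eqn:ecme.Sr} and the analogy with the vectorized multivariate \emph{t} case) are correct but inessential extras beyond the paper's argument.
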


\begin{proof}
	The proof can be found in \refap{sec:Prop1.proof}.
\end{proof}

According to \refp{prop:mvt.weight}, the observation with the weight $\bbE[\tau_n|\bx_n]$ much smaller than 1 or close to 0 can be judged as an outlier. When data contain outliers, it is expected that the degrees of freedom $\nu$ would take small values, and the Mahalanobis distances of outliers $\tr\{\bSig_c^{-1}(\bX_n-\bM)\bSig_r^{-1}(\bX_n-\bM)'\}$ would be large, which cause the outliers' weights much smaller than those of non-outliers.
Obviously, both \emph{t}PCA and RFPCA can be used for outlier detection. However, their reliability may be different. We empirically compare them detailedly in \refs{sec:expr}. 

\section{Experiments}\label{sec:expr}
%\fb{all expr code can be provided so that the experiments can be reproduced.}
In this section, we conduct experiments on synthetic and real-world datasets to compare the performance of our proposed RFPCA with five related methods, including matrix-\emph{T} PCA (TPCA) \citep{thompson2020classification}, factored PCA (FPCA) \citep{dryden-sbpca} and multivariate-\emph{t} PCA (\emph{t}PCA), bidirectional PCA (BPCA) \citep{zhangdq-2dpca} and PCA. Unless otherwise stated, random initialization is used for all iterative algorithms and the iterations are stopped when the relative change of the objective function ($|1-\cL^{(t)}/\cL^{(t+1)}|$) is smaller than a threshold $tol$ ($=10^{-8}$ in our experiments) or the number of iterations exceeds a certain value $t_{max}$($=1000$). All computations are implemented by Matlab 9.9 on a desktop computer with Intel Core i7-5820K 3.30GHz CPU and 32G RAM.

\subsection{Synthetic data}
\subsubsection{Convergence of ECME and PX-ECME}\label{sec:cov}
In this experiment, we investigate the convergence performance of the proposed ECME and PX-ECME for RFPCA using two synthetic datasets from matrix-$t$ distributions. The first Data1 is a low-dimensional dataset and the second Data2 is a high-dimensional one. Let $\mbox{linspace}(a,b,n)$ represent a vector of $n$ points equally spaced between $a$ and $b$, and $\bu_1=[\frac{1}{\sqrt{2}}, -\frac{1}{\sqrt{2}}, 0,$ $ \dots,0]'$, $\bu_2=[ 0, 0, \frac{1}{\sqrt{2}}, -\frac{1}{\sqrt{2}},$ $ 0, \dots,0]'$, $\bu_3=[0, 0, 0, 0,\frac{1}{\sqrt{2}}, -\frac{1}{\sqrt{2}}, 0, \dots,0]'$.

(i) Low-dimensional Data1: Data1 is generated from $Mt_{4,10}(\bM, \bSig_c, \bSig_r, \nu)$, where $\bM=\bo$, $\nu=3$, the eigenvalues of $\bSig_c$ and $\bSig_r$ are $(5,\mbox{linspace}(0.8,0.5,3))$ and $(4,3,2,\mbox{linspace}(0.5,0.3,7))$ respectively. Since the first eigenvalue of $\bSig_c$ is dominant, the number of principal components $q_c=1$. Similarly, $q_r=3$. The leading principal components are set as $\bU_c=\bu_1$ and $\bU_r=(\bu_1,\bu_2,\bu_3)$, respectively. The sample size is 500. 

(ii) High-dimensional Data2: Data2 is generated from $Mt_{100,100}(\bM, \bSig_c, \bSig_r, \nu)$, and its parameters are the same as Data1 except for the last 97 eigenvalues of $\bSig_r$ and $\bSig_c$ are $\mbox{linspace}(0.5,0.3,97)$ and $\mbox{linspace}(0.8,$ $0.5,97)$ respectively. The sample size is still 500.
\begin{figure*}[htb]
	\centering \scalebox{0.7}[0.75]{\includegraphics*{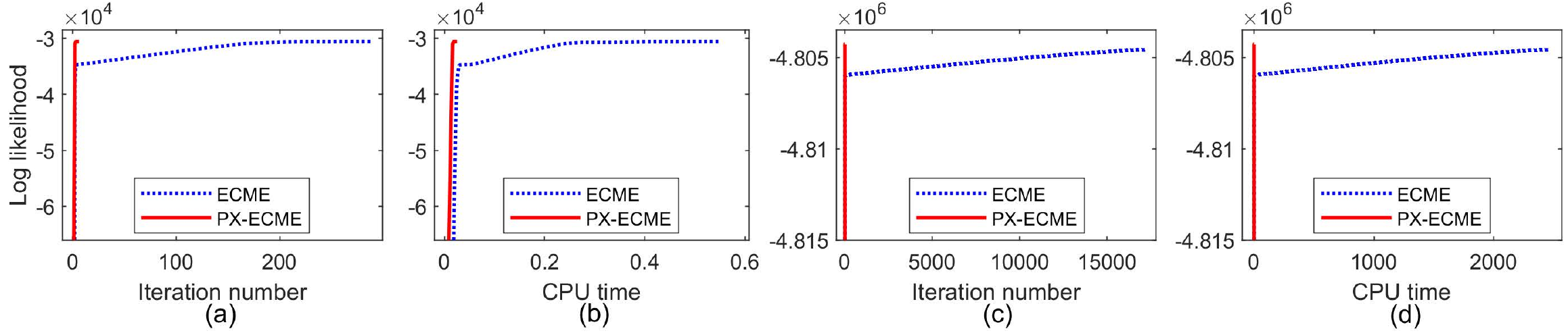}}
	\caption{Typical evolvements of data log-likelihood for the ECME (dotted) and PX-ECME (solid) algorithms versus (a) number of iterations on Data1, (b) CPU time on Data1, (c) number of iterations on Data2, and (d) CPU time on Data2.} \label{fig:cov}
\end{figure*}

We fit the two datasets using the ECME and PX-ECME, respectively. For fair comparison, we use the same initialization for both algorithms. For demonstration purpose, we set $t_{max}=20000$ when fitting Data2, since the ECME algorithm converges slowly in this high-dimensional case. \reff{fig:cov} (a) and (b) respectively show the evolution of log likelihood versus the number of iterations and used CPU time on Data1; \reff{fig:cov} (c) and (d) show the results on Data2. It can be seen that PX-ECME converges significantly faster than ECME no matter whether in low-dimensional or high-dimensional cases. Therefore, we shall use the more efficient PX-ECME algorithm for RFPCA in our experiments. 

\subsubsection{Accuracies of estimators}\label{sec:acc}
In this experiment, we compare the accuracies of the estimators between RFPCA and \emph{t}PCA in finite sample cases. For comparison, the performances of TPCA and FPCA are also included. We sample data from the Data1 in \refs{sec:cov} but with two values of $\nu$: $\nu=3$ and $\nu=30$, where the added case of $\nu=30$ is close to a matrix-normal distribution.

We generate training data with varying sample size $N$ from the set $\{50, 100, 200, 500, 1000\}$ to investigate the finite sample performance. To compare the estimates obtained by different methods, we choose the test data log-likelihood as performance measure. An independent test dataset of size $N=5000$ is generated to calculate the test data log-likelihood. In order to reduce variability, we report the average results over 20 repetitions. 

\begin{figure*}[htb]%[!t]%%[H]%[!h]%[tb]
	\centering
	\scalebox{0.75}[0.75]{\includegraphics{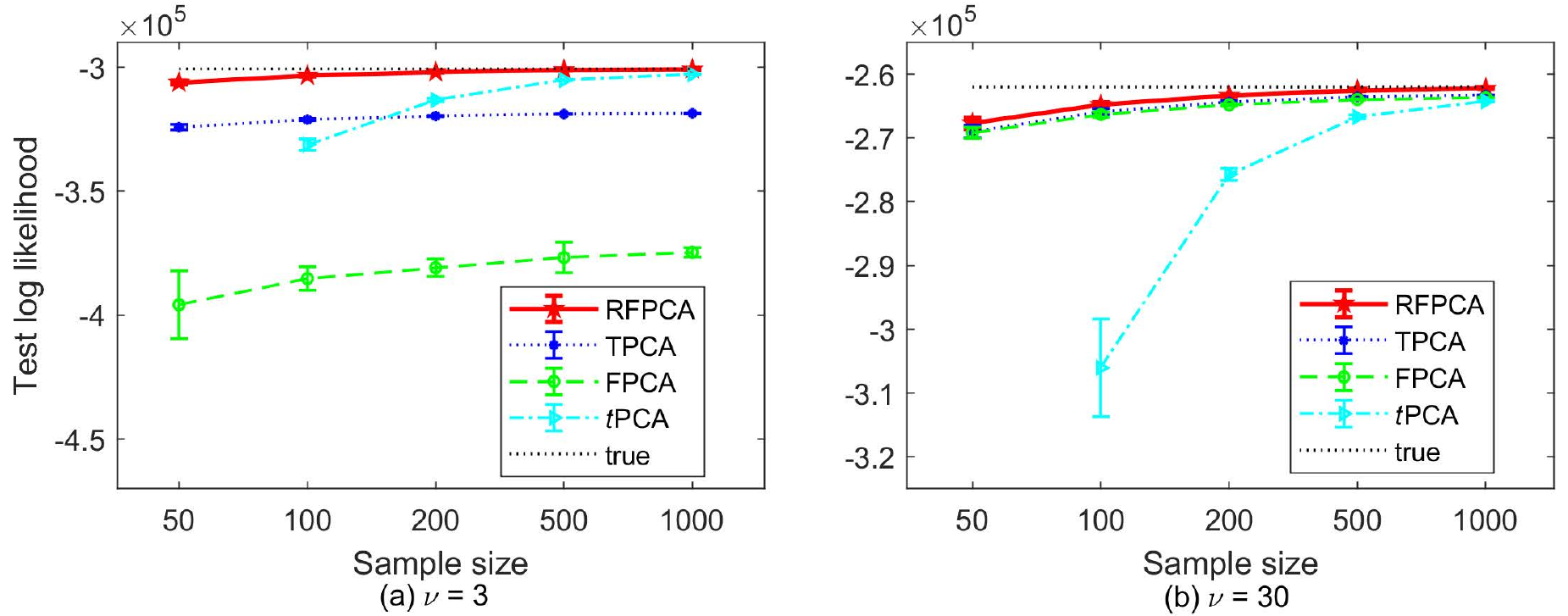}}
	%\caption{Changes in the average test log-likelihood and standard deviation with the training sample size.}
	\caption{The average test log-likelihood and standard deviation versus training sample size $N$ for (a) $\nu=3$ and (b) $\nu=30$.}
	\label{fig:llh}
\end{figure*}

\reff{fig:llh} (a) and (b) show the curve of test data log-likelihood by different methods versus the training sample size $N$ for $\nu=3$ and $\nu=30$, respectively. The following can be observed from \reff{fig:llh} (a) and (b) 

(i) No matter whether $\nu=3$ or $\nu=30$, as the training sample size increases, the test likelihoods of both RFPCA and \emph{t}PCA converge to the true one. Nevertheless, RFPCA is substantially better than \emph{t}PCA in finite sample cases as expected.

(ii) When $\nu=30$, as the training sample size increases, the test likelihoods of all methods are close to the true one. When $\nu=3$, FPCA and TPCA yield the worse performance. This observation is not surprising as the data is generated from the matrix-\emph{t} distribution. Nevertheless, this observation also indicates that RFPCA has wider applicability than FPCA since FPCA performs satisfactorily only for large $\nu$ cases. 

\subsubsection{Robustness}\label{sec:robust}
In this experiment, we examine the robustness of all six methods using a synthetic dataset containing abnormal observations. We first generate a dataset with the sample size $N=1000$ from the $4 \times 10$-dimensional matrix-normal distribution $\cN(\bM,\bSig_c,\bSig_r)$, where the parameters $\bM$, $\bSig_c$ and $\bSig_r$ are the same as those of Data1 in \refs{sec:cov}. After the dataset is generated, we add it with $Np$ outliers from the uniform distribution $U(100,110)$. This data is denoted as Data3 for clarity.

To compare the estimates obtained by different methods, we report the F-norm of the difference from the true covariance matrix $\bSig=\bSig_r\otimes\bSig_c$ to the estimated $\bwSig$, namely $\Vert \bSig-\bwSig \Vert_F$. For PCA, $\bwSig$ is the sample covariance matrix. For \emph{t}PCA, $\bwSig$ is the robust sample covariance matrix \refe{eqn:ecme.t.S}. For BPCA, $\bwSig=\bS_r\otimes\bS_c/\tr(\bS_c)$ \citep{zhao2012-slda}, where $\bS_c=\frac{1}{N}\sum_{n=1}^{N}(\bX_n-\bar{\bX})(\bX_n-\bar{\bX})'$, $\bS_r=\frac{1}{N}\sum_{n=1}^{N}(\bX_n-\bar{\bX})'(\bX_n-\bar{\bX})$ ($\bar{\bX}$ is the sample mean). For FPCA and RFPCA, $\bwSig=\bwSig_r\otimes\bwSig_c$, where ($\bwSig_c$, $\bwSig_r$) is obtained by \refe{eqn:fpca.Sc}, \refe{eqn:fpca.Sr} and \refe{eqn:px-ecme.Sc}, \refe{eqn:px-ecme.Sr}, respectively. For TPCA, by comparing \refe{eqn:mvt.covx} with \refe{eqn:T.covx}, we use $\bwSig=\bwSig_r\otimes\bwSig_c/\hat{\nu}$, where $\bwSig_c$ and $\bwSig_r$ are obtained by \refe{eqn:T.Sc} and \refe{eqn:T.Sr}. To reduce statistical variability, we report the average results over 50 repetitions. 

\begin{table*}[htb]
	\centering
	\caption{\label{tab:robust} Euclidean distance between the estimated and true covariance matrix for multiple $p$'s.}
	\begin{tabular}{ccccccc}
		\toprule
		\multirow{2}[2]{*}{Proportion $p$} & \multicolumn{6}{c}{Method} \\
		\cmidrule{2-7}        & RFPCA &    TPCA    &   FPCA   &   BPCA  & \emph{t}PCA & PCA   \\ \midrule
		0                     &      1.1     &    1.1     &   1.1    &   1.6   &     2.7     & 2.7 \\
		2\%                  &      2.1     &    1.5     &  7968.1  &  8404.2 &     3.6     & 8477.5 \\
		3\%                  &      2.5     &    2.5     & 10645.5  & 12397.9 &    230.1    & 12472.9 \\ 
		7\%                  &      5.4     &    6.7     & 19282.9  & 26875.4 &   1229.7    & 26956.6 \\ 
		9\%                   & 12.8  &    9.0     & 22755.3  & 33322.2 &   1735.7    &33406.0 \\ \bottomrule		
	\end{tabular}
\end{table*}

\reft{tab:robust} summarizes the results for various proportions $p$. The main observations include

(i) RFPCA versus FPCA, BPCA and PCA: RFPCA is robust while FPCA, BPCA and PCA are all vulnerable to outliers. The larger the value of $p$, the greater the distance.

(ii) RFPCA versus \emph{t}PCA: \emph{t}PCA is robust when $p\leq2\%$, but it is not robust when $p\geq3\%$. In contrast, RFPCA is robust for all values of $p$. This means that RFPCA can accommodate a larger proportion of outliers compared with \emph{t}PCA.

(iii) RFPCA versus TPCA: Both RFPCA and TPCA are robust. Although the original intention of TPCA proposed in \cite{thompson2020classification} is not to propose a robust method, but to improve the fitting for matrix data, we find in this paper that TPCA is also a robust method. 

The robustness of estimators can be measured by breakdown points. As shown in \cite{Dumbgen-bkd}, the breakdown point of the multivariate \emph{t} distribution is upper bounded by $1/(d+\nu)(<1/d)$. For the data we used, $d=cr=40$ and hence $1/d=0.025$. This is the reason why \emph{t}PCA performs well when $p\leq2\%$ but fails completely when $p\geq3\%$. However, this theoretical bound of the multivariate \emph{t} distribution does not hold true for RFPCA, as RFPCA still performs well even when $p=9\%$. This reveals that the proposed RFPCA has a significantly higher bound than \emph{t}PCA, which makes RFPCA more applicable to matrix data.

\subsubsection{Outlier detection}\label{sec:simu.odtc}
As detailed in \refs{sec:mvt.odtc}, both \emph{t}PCA and RFPCA can be used for outlier detection. In this experiment, we compare their performance using synthetic datasets with different abnormal levels. The first dataset is the Data3 used in \refs{sec:robust}, namely the sample size is $N(1+p)$, where $N=1000$ for normal observations and $Np$ for abnormal ones from $U(100,110)$. The second Data3-2 and third Data3-3 are the same as the first except that the abnormal observations are drawn from $U(100,102)$ and $U(100000,100002)$, respectively. We set $p=0.05$ for all the three datasets.

As a robust method, it is expected that the weights of outliers should be as small as possible, so that the influence of the outliers to the estimates can be down-weighted. This can be seen from \refe{eqn:ecme.t.mu} and \refe{eqn:ecme.t.S} for \emph{t}PCA, and from \refe{eqn:ecme.M}, \refe{eqn:ecme.Sc} and \refe{eqn:ecme.Sr} for RFPCA.

\reff{fig:simu.oidc} shows the scatter plot of the weights for all normal and outlying observations on the three datasets. It can be seen that (i) on Data3, both RFPCA and \emph{t}PCA are competent for this task as the weights of the outliers are much smaller than those of the normal observations; (ii) However, for Data3-2 and Data3-3, RFPCA performs satisfactorily while \emph{t}PCA fails completely as the normal observations and outliers have similar weights. Therefore, RFPCA can be much more reliable than \emph{t}PCA for outlier detection; In addition, the stable performance by RFPCA from Data3-2 to Data3-3 reveals that RFPCA can be used for detecting not only mild but also gross outliers, while the freshly proposed method in \cite{tomarchio2021mixtures} can only be used to detect mild outliers. 
\begin{figure*}[htb]
	\centering \scalebox{0.7}[0.65]{\includegraphics*{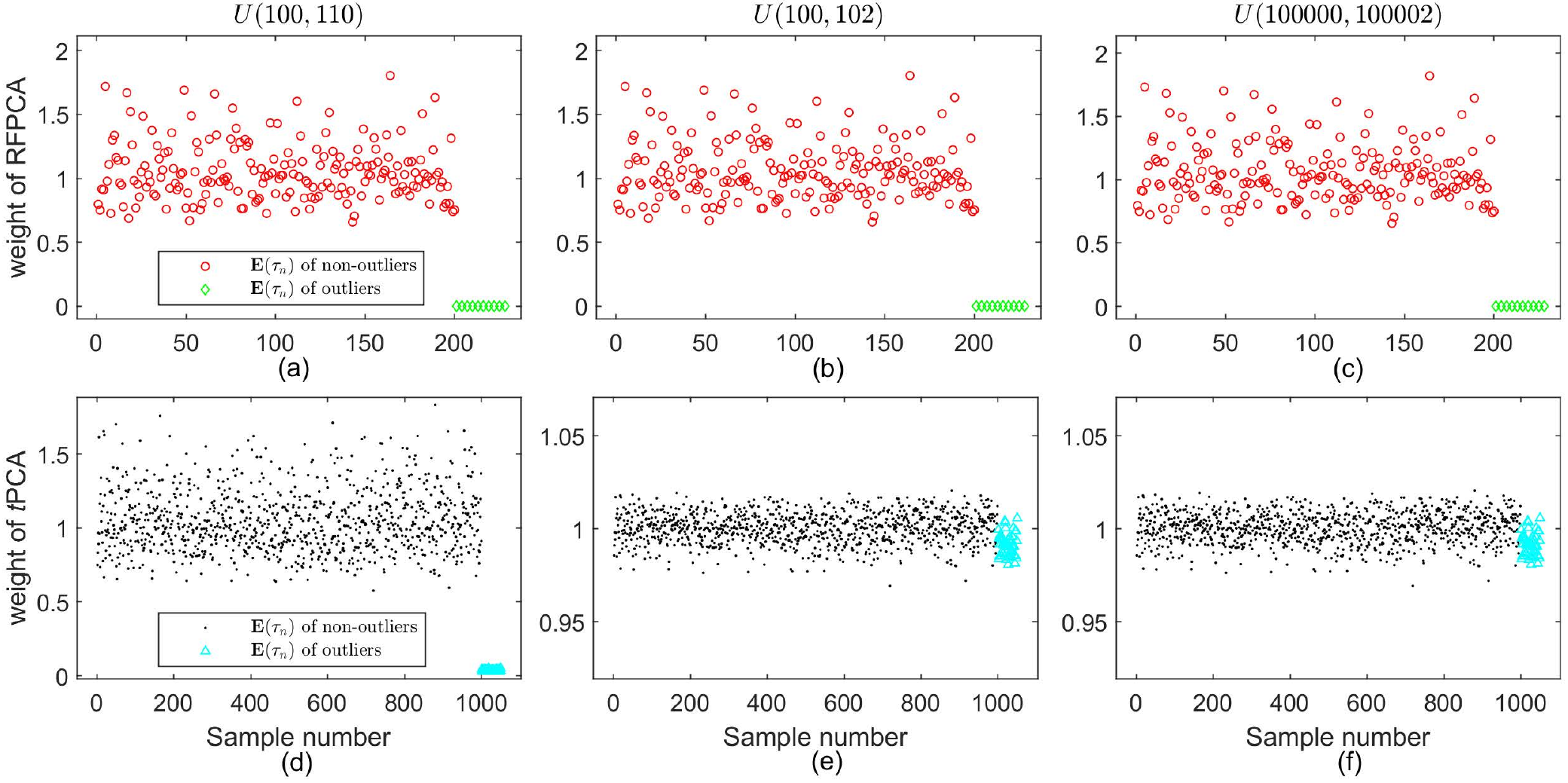}}%{0.8}[0.75]
\caption{Top row: Weights of non-outliers (circle) and outliers (diamond) by RFPCA. Bottom row: Weights of non-outliers (point) and outliers (up triangle) by \emph{t}PCA.}\label{fig:simu.oidc}
\end{figure*} 

\subsubsection{Performance on computational efficiency}\label{sec:eff}
\reft{tab:complex2} gives the theoretical time complexity for RFPCA, TPCA and FPCA. In this experiment, we empirically compare their computational efficiency using a synthetic dataset contaminated by outliers. We generate data from the $100\times100$-dimensional matrix-normal distribution $\cN_{100,100}(\bM,\bSig_c,\bSig_r)$, where the parameters $\bM$, $\bSig_c$ and $\bSig_r$ are set as those of Data2 in \refs{sec:cov}. The sample size is selected from the set $\{200,500,1000,2000,4000,$ $8000,13000\}$. After the dataset is generated, we add it with $Np$ outliers from the uniform distribution $U(100,110)$, where $p=0.5\%$. 

\begin{figure*}[htb]
	\centering \scalebox{0.7}[0.65]{\includegraphics*{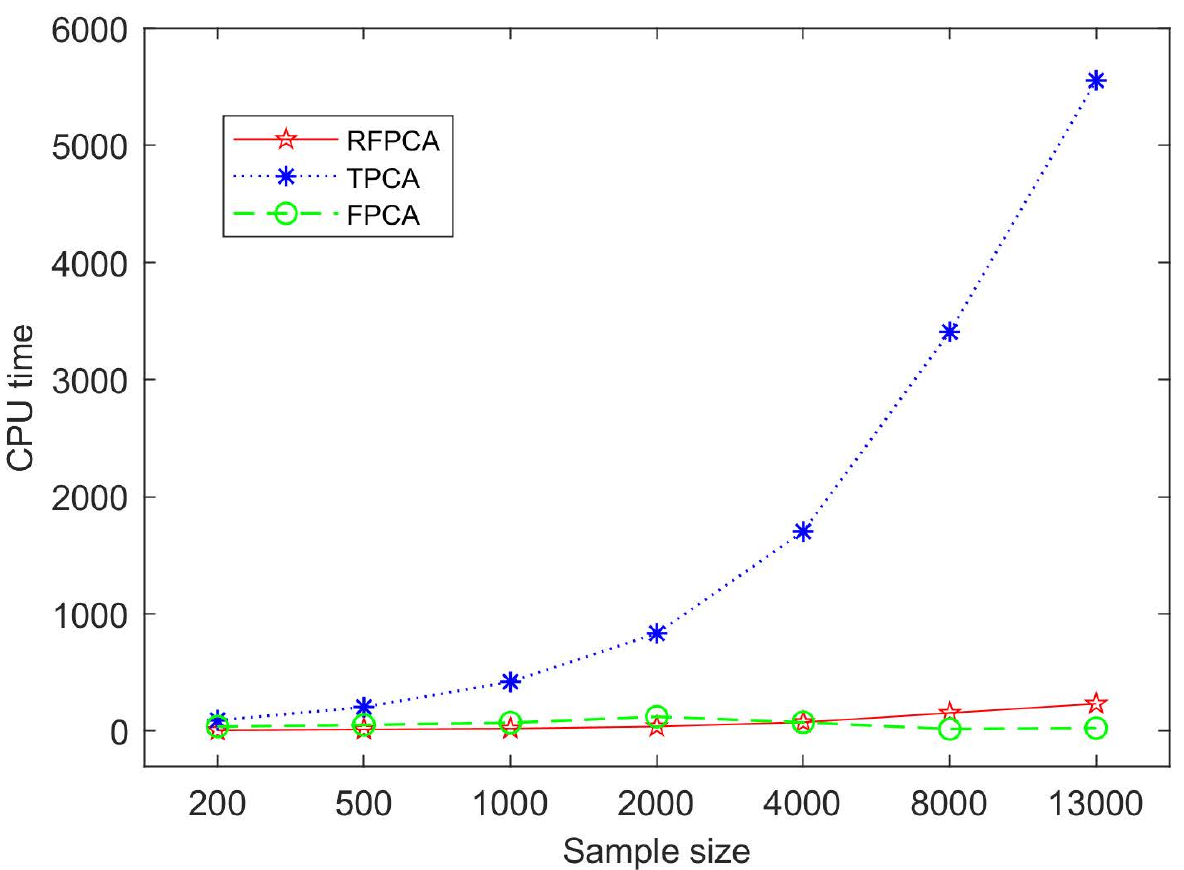}} 
	\caption{CPU time in seconds used by different methods as the sample size varies.} \label{fig:time}
\end{figure*}
\begin{table}[htb]
	\centering
	\caption{\label{tab:eff} The used CPU time and numbers of iterations by different methods on synthetic datasets with varying sample sizes.}
	\begin{tabular}{ccccccc}
		\toprule
		\multirow{2}{*}{Sample size} & \multicolumn{3}{c}{CPU time}    & \multicolumn{3}{c}{Number of iterations}  \\ \cmidrule(r){2-4} \cmidrule(r){5-7}   
		&  RFPCA &   TPCA  &  FPCA  & RFPCA &  TPCA  &  FPCA  \\ \midrule
		500	      &  14.2  &  203.6  &  52.8  &  22   &   42   &    175  \\  
		2000	  &  39.1  &  834.3  & 122.5  &  18   &   43   &    125 \\ 
		8000	  & 153.8  &  3407.8 &  17.2  &  18   &   43   &    4 \\
		13000	  & 232.1  &  5553.6 &  25.3  &  18   &   43   &    4 \\ \bottomrule
	\end{tabular}
\end{table}
We fit the data using the three methods. \reff{fig:time} plots their used CPU time in seconds as the sample size varies. \reft{tab:eff} collects some detailed results for several sample sizes.

It can be seen from \reff{fig:time} that both RFPCA and FPCA are computationally much more efficient than TPCA, especially when the sample size gets large. At first glance, this observation might not be consistent with the theoretical analysis given in \reft{tab:complex2}, since TPCA only has slightly higher per-iteration complexity than RFPCA. Two reasons are given below. First, from \reft{tab:eff} RFPCA requires smaller number of iterations than TPCA, as RFPCA is fitted by our proposed fast PX-ECME algorithm. Second, at each iteration, TPCA involves the inverses and determinants of $N$ matrices of size $c\times c$ while RFPCA mainly includes matrix multiplication. Despite their similar complexity (as $c=r$ for this data), the matrix multiplication in practice can be much more efficient than the inverses and determinants on this high-dimensional dataset. 

In addition, as \emph{t}PCA can not be run for the cases $N<d=cr=10000$, we run \emph{t}PCA for the case $N=13000$. As a result, \emph{t}PCA requires 32 iterations and takes 19254.2 seconds, which is far more than the three methods.

\subsection{Face datasets}\label{sec:cls.face}
In this subsection, we perform experiments on the following two real-world face datasets.

\textbullet\, XM2VTS\footnote{Available from \url{http://www.face-rec.org/databases/}}, comprises four image acquisitions of 295 people in a period of four months. Each acquisition contains two images and hence each person has 8 images. The size of each image is $51\times55$.

\textbullet\, PIE face database\footnote{Available from \url{http://www.ri.cmu.edu/projects/project_418.html}} contains 41368 face images of 68 people. The facial images for each person were captured under 13 different poses, 43 different illumination conditions, and with four different expressions. In our experiments, we use the same sub-database as in \cite{zhao2015-rlda-2stage}, where each person has 43 images, composed of frontal images under different lighting conditions. The size of each image is $64\times64$.

Each dataset is divided into a training set and a test set. $r$ images from each person are randomly selected to form the training set of size $N_{tr}$, and the remaining $N_{ts}$ images are used as the test set. Two values of $\gamma$ are considered from the candidate set $\{3, 4,5,6\}$. To compare the robustness of different methods, we add the training set with $N_{tr}p$ outlying images. Their pixel values are randomly set to 0 or 1 with probability 0.5. We set $p=0.1$ in the following experiments. 

\subsubsection{Outlier detection}\label{sec:face.odtc}
In this experiment, we use the two face datasets to investigate the performance of detecting outlying images by RFPCA and \emph{t}PCA. However, the result by \emph{t}PCA is not available as \emph{t}PCA can not be run when the sample size $N_{tr}$ is smaller than the vectorized data dimension $d$. For example, $N_{tr}=1298, d=2805$ for XM2VTS ($\gamma=4$) and $N_{tr}=449, d=4096$ for PIE ($\gamma=6$). \reff{fig:oidc} shows the scatter plot of the weights for all the normal and outlying images on XM2VTS ($\gamma=4$) and PIE ($\gamma=6$). It can be seen that RFPCA works well on these two contaminated datasets.

\begin{figure*}[htb]
	\centering \scalebox{0.9}[0.9]{\includegraphics*{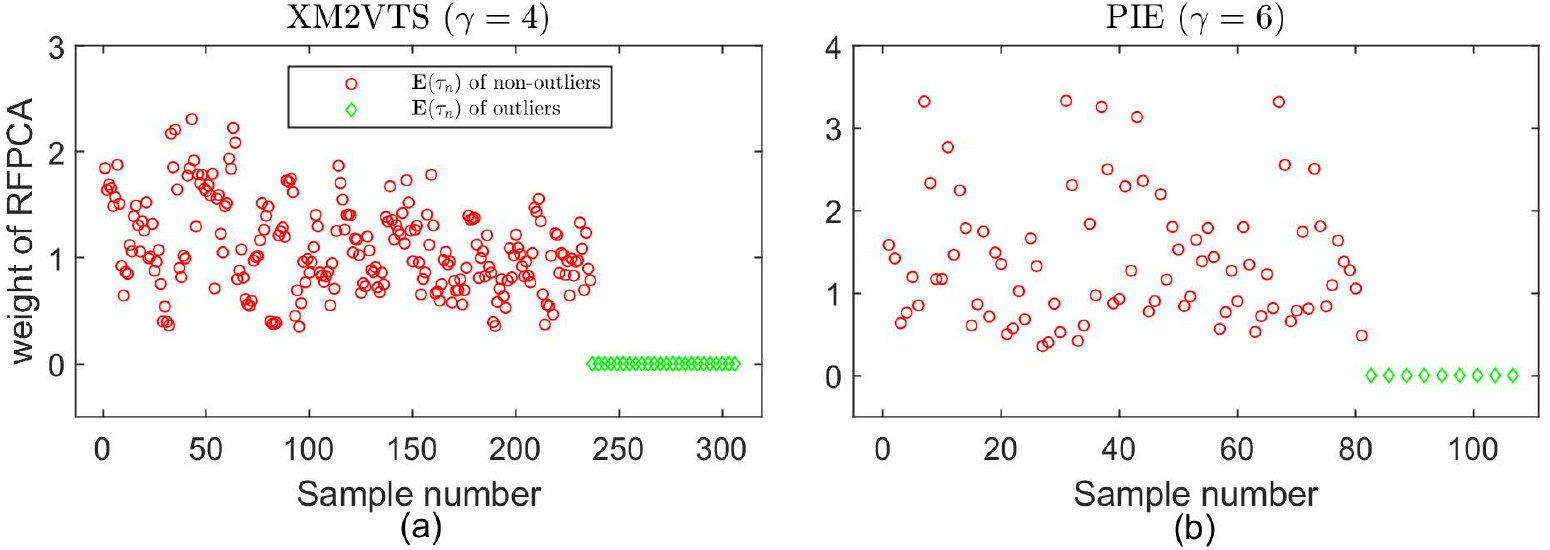}}
	\caption{Weights of non-outliers (circle) and outliers (diamond) by RFPCA on (a) XM2VTS and (b) PIE.} \label{fig:oidc}
\end{figure*}

\subsubsection{Classification performance on face datasets}\label{sec:cls.face.res}
In this experiment, we compare the classification performance of RFPCA with the other five methods. For PCA and \emph{t}PCA, the reduced representation in the $q$-dimensional space is $\bz=\bLmd^{-1/2}\bU'\bx$, where $(\bU,\bLmd)$ is detailed in \refs{sec:t.tPCA}.
For RFPCA, TPCA and FPCA, the reduced representation in the $(q_c,q_r)$-dimensional space is $\bZ=\bLmd_c^{-1/2}\bU_c'\bX\bU_r\bLmd_r^{-1/2}$, where $(\bU_c,\bLmd_c)$ and $(\bU_r,\bLmd_r)$ are detailed in \refs{sec:mn.fpca}. For BPCA, we follow \cite{zhangdq-2dpca} to use the reduced representation $\bZ=\bU_c'\bX\bU_r$. 

For all methods, we try all possible dimensionalities (i.e., all possible values of $q$ or $(q_c,q_r)$) of the reduced representation and use 1-nearest neighbor classifier in the reduced-dimensional space to obtain the classification error rates. To test whether the performance difference between two methods is significant, we use the Wilcoxon signed-rank test. 

\begin{table*}[htbp]
	\centering
	\caption{\label{tab:cls.face} The lowest average error rates (mean $\pm$ std) and their corresponding dimensions by different methods on the face datasets. The best method is shown in boldface. $\bullet$ means that RFPCA is significantly better than the method, and $\circ$ means that the difference between RFPCA and the method is not significant, using the Wilcoxon signed-rank test with a $p$-value of 0.05.}
	\resizebox{\linewidth}{!}{
		\begin{tabular}{ccccc}
			\toprule
			\multirow{2}{*}{Method} &   \multicolumn{2}{c}{XM2VTS ($\gamma=4$)}     & \multicolumn{2}{c}{PIE ($\gamma=6$)}         \\ \cmidrule(r){2-3} \cmidrule(r){4-5} 
			&        without outliers     &     with 10\% outliers       &        without outliers       &    with 10\% outliers  \\ \midrule
			RFPCA   & {\bf11.4}$\pm$2.0(17, 7)    & {\bf11.3}$\pm$2.3(12, 7)     &9.4$\pm$5.5(26, 12)            &{\bf9.3}$\pm$5.5(26, 12)\\ 
			TPCA    & 11.6$\pm$2.0(12, 7)$\circ$  & 11.6$\pm$2.2(12, 7)$\bullet$ &{\bf9.3}$\pm$5.1(26, 15)$\circ$&{\bf9.3}$\pm$5.1(27, 15)$\circ$\\
			FPCA    &11.6$\pm$2.1(12, 8)$\circ$   & 12.8$\pm$1.8(21, 10)$\bullet$&9.6$\pm$5.6(26, 14)$\bullet$   &10.7$\pm$6.5(64, 19)$\bullet$\\
			BPCA    &11.6$\pm$2.1(12, 8)$\bullet$ & 16.5$\pm$2.1(37, 51)$\bullet$&30.2$\pm$8.5(64, 64)$\bullet$  &30.2$\pm$8.5(64, 64)$\bullet$ \\
			\emph{t}PCA&               —             &               —              &             —                 &     —   \\
			PCA     &14.6$\pm$2.4(91)$\bullet$    & 14.7$\pm$2.1(199)$\bullet$   &10.9$\pm$7.2(74)$\circ$        &10.9$\pm$7.2(109)$\circ$\\ \bottomrule
		\end{tabular}
	}
\end{table*} 

\begin{table}[htb]
	\centering
	\caption{\label{tbl:time} Average training time in seconds by different methods.}
	\begin{tabular}{cccccc}
		\toprule 
		\multirow{2}{*}{Dataset} & \multicolumn{5}{c}{Method}\\ \cmidrule{2-6} 
		& RFPCA &   TPCA   & FPCA & BPCA & PCA  \\ \midrule
		XM2VTS ($\gamma$=4) &     4.84     &  95.88   & 1.43 & 0.12 & 2.06 \\
		PIE ($\gamma$=6)    &     3.03     &  42.00   & 1.10 & 0.10 & 0.40 \\ \bottomrule	  
	\end{tabular}
\end{table}
\reft{tab:cls.face} reports the results by all the methods over 10 random splitting of XM2VTS ($\gamma=4$) and PIE ($\gamma=6$), including the lowest average error rate, corresponding latent dimension and standard deviation, where the `—' sign means that the method fails to run and hence the error rate is not available. The results on XM2VTS ($\gamma=3$) and PIE ($\gamma=5$) can be found in \refap{sec:face.cls.supp}. \reft {tbl:time} reports the average training time used by each method on XM2VTS ($\gamma=4$) and PIE ($\gamma=6$) with outlying images. The main observations include

(i) Recognition: RFPCA and TPCA perform similarly and significantly outperform FPCA, BPCA and PCA in general. BPCA or PCA performs the worst while \emph{t}PCA fails to run as $N_{tr}<d$.

(ii) Robustness: RFPCA and TPCA are the most robust, as they perform stably in  recognition and the corresponding latent dimension, regardless of whether or not the dataset involves outliers. In contrast, the other methods are not robust because  when outliers are added, they usually either perform worse or require more number of features to achieve the optimal recognition performance.

(iii) Computational efficiency: TPCA requires much more time than the other methods.

In fact, we have also tried a variant of \emph{t}PCA by restricting the smallest eigenvalue of $\btSig$ in \refe{eqn:ecme.t.S} to be a small positive number $10^{-6}$. In this way, \emph{t}PCA can be run. However, its result is similar to that of PCA, and the normal observations and outliers have similar weights.

\subsection{Multivariate time series (MTS) datasets}\label{sec:expr.mts}
Multivariate time series (MTS) is a sort of matrix data and the observation matrix  typically consists of multiple variables measured at multiple time points, namely one mode in MTS is variable and the other is time. This is remarkably different from the image data in \refs{sec:cls.face.res}, where both modes of the data matrix are variables (i.e. pixels). Intuitionally, it seems that the assumption of separable covariance matrix in the four matrix-based methods are more suitable for such data. Therefore, in this experiment, we further use two publicly available real-world MTS datasets to examine the classification performance of the six methods. 

\textbullet\, The AUSLAN dataset\footnote{Available from \url{http://archive.ics.uci.edu/ml/datasets}} contains 2565 observations, captured from a native AUSLAN speaker using 22 sensors (i.e. 22 variables) on the CyberGlove. This dataset contains 95 signs (i.e. 95 classes), and each sign has 27 observations. In our experiment, we use a subset consisting of 25 signs (hence, 675 observations in total). These 25 signals are alive, all, boy, building, buy, cold, come, computer, cost, crazy, danger, deaf, different, girl, glove, go, God, joke, juice, man, where, which, yes, you and zero. The time length we use is 47 and hence the observation size is 47$\times$ 22.

\textbullet\, The ECG dataset\footnote{Available from \url{http://www.mustafabaydogan.com./files/viewcategory/20-data-sets.html}} contains 200 MTS observations, each is collected by two electrodes (i.e. 2 variables) during a heartbeat. The dataset contains two classes: normal and abnormal, and they have 133 and 67 observations, respectively. The time length we use is 39 and hence the observation size is 39$\times$ 2.

\subsubsection{Outlier detection}\label{sec:mts.odtc}
In this experiment, we use the two MTS datasets to further investigate the performance for outlier detection between RFPCA and \emph{t}PCA. However, the result by \emph{t}PCA is not available on AUSLAN and ECG ($p=1/4$), as \emph{t}PCA can not be run when the sample size $N_{tr}<d$. For example, $N_{tr}=605, d=1034$ on AUSLAN ($p=4/5$) and $N_{tr}=55, d=78$ for ECG ($p=1/4$). 

\reff{fig:mts.oidc} shows the scatter plot of the weights for all the normal and outlying matrix observations on AUSLAN and ECG datasets. It can be seen that RFPCA works well on both AUSLAN and ECG datasets while \emph{t}PCA can only be used on ECG dataset when $p=4/5$.

\begin{figure*}[htb]
	\centering \scalebox{0.8}[0.8]{\includegraphics*{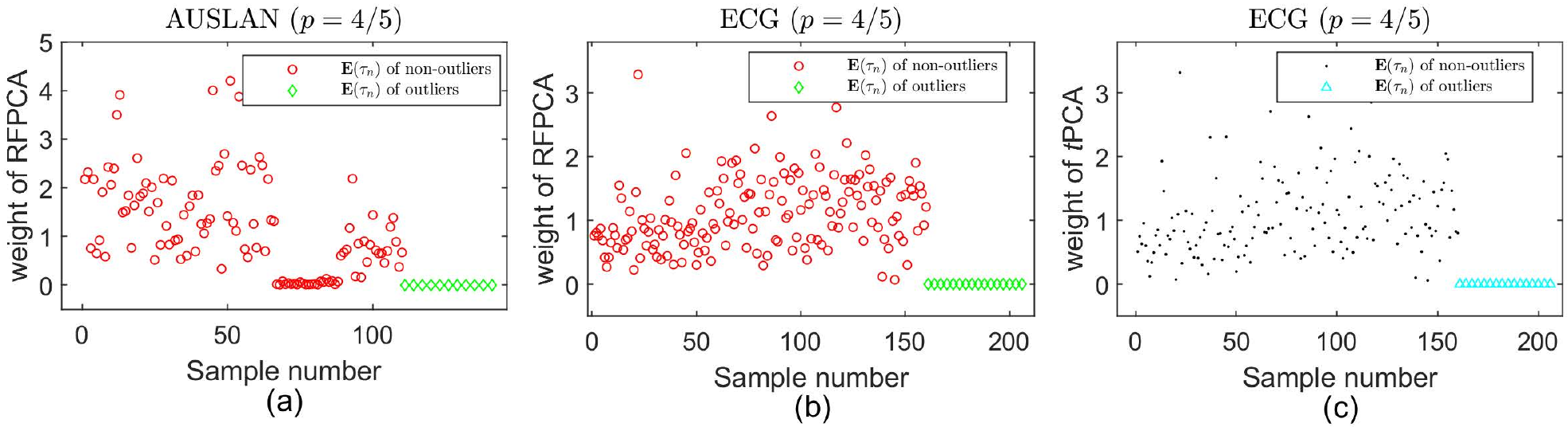}}
	\caption{Weights of non-outliers (circle) and outliers (diamond) by RFPCA on (a) AUSLAN and (b) ECG. Weights of non-outliers (point) and outliers (up triangle) by \emph{t}PCA on (c) ECG.} \label{fig:mts.oidc}
\end{figure*}

\subsubsection{Classification Performance on Multivariate time series (MTS) datasets}\label{sec:mts.cls}
To measure the classification error rate, we randomly select a part of the observations with proportion $p$ from each class as the training set, and the remaining observations as the test set. For each dataset, two values of $p$ from the candidate set $\{1/6,1/4,4/5\}$ are considered. Since the ECG dataset has a large numerical range, we preprocess the data by the normalization transformation. To examine the robustness, we add 10\% outlying observations from the uniform distribution $U(-10,10)$ to the training set. Like \refs{sec:cls.face}, we use the simple 1-nearest neighbor classifier in the reduced-dimensional space to obtain the classification error rates.
\begin{table*}[htb]
	\centering
	\caption{\label{tab:cls.MTS} The lowest average error rates (mean $\pm$ std) and their corresponding dimensions by different methods on the MTS datasets. The best method is shown in boldface. $\bullet$ means that RFPCA is significantly better than the method, and $\circ$ means that the difference between RFPCA and the method is not significant, using the Wilcoxon signed-rank test with a $p$-value of 0.05.}
	\resizebox{\linewidth}{!}{
		\begin{tabular}{ccccc}
			\toprule
			\multirow{2}{*}{Method} &   \multicolumn{2}{c}{AUSLAN ($p=4/5$)}      & \multicolumn{2}{c}{ECG ($p=4/5$)}  \\ \cmidrule(r){2-3} \cmidrule(r){4-5}
			&        without outliers      &     with 10\% outliers      &      without outliers       &    with 10\% outliers  \\ \midrule
			RFPCA      &5.2$\pm$1.5(1, 14)            & {\bf5.1}$\pm$1.5(1, 14)     &{\bf12.0}$\pm$3.9(11, 1)     &{\bf12.5}$\pm$6.8(8, 1)\\
			TPCA       &            —                 &           —                 &12.5$\pm$2.9(11, 1)$\circ$   &12.8$\pm$4.3(11, 1)$\circ$\\      
			FPCA       &{\bf2.8}$\pm$1.1(1, 14)$\circ$&7.4$\pm$4.1(1, 20)$\bullet$  &12.2$\pm$4.8(8, 1)$\circ$    &14.5$\pm$3.9(29, 2)$\circ$\\
			BPCA       &6.2$\pm$2.7(1, 21)$\bullet$   &21.8$\pm$2.8(21, 21)$\bullet$&15.8$\pm$5.3(21, 2)$\bullet$ &15.5$\pm$5.0(29, 2)$\bullet$\\
			\emph{t}PCA    &              —               &   —                         &13.5$\pm$5.4(10)$\circ$      &13.3$\pm$5.8(10)$\circ$\\
			PCA        &9.9$\pm$1.5(44)$\bullet$      &10.4$\pm$1.5(95)$\bullet$    &14.3$\pm$6.8(22)$\circ$      &15.8$\pm$5.9(25)$\circ$  \\ \bottomrule
		\end{tabular}
	}
\end{table*} 

\reft{tab:cls.MTS} summarizes the results on AUSLAN ($p=4/5$) and ECG ($p=4/5$) over 10 random splittings, where the `—' sign means that the method fails to run and hence the error rate is not available. The results on AUSLAN ($p=1/6$) and ECG ($p=1/4$) can be found in \refap{sec:mts.cls.supp}. The main observations are generally consistent with those in \refs{sec:cls.face.res}, but with the following three exceptions. 

(i) Although FPCA obtains the best performance on the original AUSLAN dataset, it degenerates significantly when outliers are added.

(ii) The only case that \emph{t}PCA can be run is ECG ($p=4/5$). Nevertheless, it performs slightly worse than RFPCA and TPCA.

(iii) No matter whether the AUSLAN dataset include outliers or not, both the TPCA  implementation by our program and that with the R package \texttt{MixMatrix} \citep{MixMatrix} fail to run due to numerical problems.

\section{Conclusion and future works}\label{sec:Conclusion} 
In this paper, we propose a new robust extension of FPCA (RFPCA), which is built upon the matrix-variate $t$ distribution. To obtain the ML estimates of the parameters, we develop an ECME algorithm and its parameter-expanded variant, namely PX-ECME. The PX-ECME is compared favorably with the ECME as shown in our experiments. Our empirical results on matrix datasets show that RFPCA improves the robustness of FPCA as expected. More importantly, RFPCA is much more robust than its vector-based counterpart \emph{t}PCA, which means that RFPCA significantly improves the capability of \emph{t}-type distributions in matrix data applications. In addition, both RFPCA and TPCA are robust, but RFPCA can be readily used for outlier detection and the proposed PX-ECME algorithm for RFPCA is more effective and efficient. 

Given our empirical result that the theoretical bound of the multivariate \emph{t} distribution is not applicable to RFPCA, for future research it is interesting to explore this issue theoretically, like that for the multivariate \emph{t} distribution \citep{Dumbgen-bkd}. On the other hand, mixture modeling using matrix-normal distributions have been proposed in \cite{viroli2011finite} for classifying and clustering matrix data, it would be worthwhile to extend our method in this paper to mixtures of matrix-variate $t$ distributions for outlier accommodation and detection, and compare with its vector-based counterpart, namely mixtures of multivariate $t$ distributions. The extension to the case in the presence of missing data can also be considered, similar to \cite{wang2015mixtures}. 

\begin{appendices}
\section{The Proofs}
\subsection{The PX-ECME algorithm as a modified ECME}\label{sec:PX-ECME.ECME}
Under \textsf{Model IV} in \refe{eqn:mvt.hrc}, we have $\bX_n\sim Mt_{c,r}(\bM, \bSig_c, \bSig_r, \nu)$. Under expanded \textsf{Model V} in \refe{eqn:mvt.hrc.PX}, we have $\bX_n\sim Mt_{c,r}(\bM_*, \bSig_{c*}/\alpha, \bSig_{r*}, \nu_*)$. It can be seen that $\bSig_{c*}/\alpha$ corresponds to $\bSig_c$ and $(\bM_*, \bSig_{r*}, \nu_*)$ to $(\bM,\bSig_r,\nu)$. Under \textsf{Model V}, the PX-ECME algorithm in the expanded parameter space proceeds as follows.

\noindent{\bf E-step:} The required expectation is computed by
\begin{equation*}
	\bbE[\tau_{n*}|\bX_n]=\alpha\frac{\nu_{*}+cr}{\nu_{*}+\tr\{(\bSig_{c*}/\alpha)^{-1}(\bX_n-\bM_{*})\bSig_{r*}^{-1}(\bX_n-\bM_{*})'\}}.
\end{equation*}
\noindent{\bf CM-steps:} The parameters $(\alpha,\btM_{*},\btSig_{c*},\btSig_{r*})$ are updated by
\begin{IEEEeqnarray*}{rCl}
	\alpha&=&\sum\nolimits_{n=1}^N\bbE[\tau_{n*}|\bX_n]/N,\\
	\btM_{*}&=&\frac1{\sum\nolimits_{n=1}^N\bbE[\tau_{n*}|\bX_n]}\sum\nolimits_{n=1}^N\bbE[\tau_{n*}|\bX_n]\bX_n,\\
	\btSig_{c*}&=&\frac{1}{Nr}\sum\nolimits_{n=1}^N\bbE[\tau_{n*}|\bX_n] (\bX_n-\btM_{*})\bSig_{r*}^{-1}(\bX_n-\btM_{*})',\\
	\btSig_{r*}&=&\frac{1}{Nc}\sum\nolimits_{n=1}^N\bbE[\tau_{n*}|\bX_n] (\bX_n-\btM_{*})'\btSig_{c*}^{-1}(\bX_n-\btM_{*}).	
\end{IEEEeqnarray*}
Since the updated degrees of freedom $\nu_{*}$ is achieved by maximizing the observed data log-likelihood $\cL$ given by \refe{eqn:tfpca.like}, the step to update  $\nu_{*}$ under \textsf{Model V} is the same as the CML-step 4 under \textsf{Model IV} in \refs{sec:RFPCA.ECME}.

Applying the mapping $(\bM_*,\bSig_{c*}/\alpha,\bSig_{r*},\nu_*) \rightarrow (\bM,\bSig_c,\bSig_r,\nu)$ from \textsf{Model V} to \textsf{Model IV}, we obtain
\begin{IEEEeqnarray*}{rCl}
	\btM&=&\btM_{*}=\frac1{\sum\nolimits_{n=1}^N\bbE[\tau_{n*}|\bX_n]}\sum\nolimits_{n=1}^N\bbE[\tau_{n*}|\bX_n]\bX_n=\frac1{\sum\nolimits_{n=1}^N\bbE[\tau_n|\bX_n]}\sum\nolimits_{n=1}^N\bbE[\tau_n|\bX_n]\bX_n,\\
	\btSig_c&=&\btSig_{c*}/\alpha=\frac{1}{r \sum\nolimits_{n=1}^N\bbE[\tau_{n*}|\bX_n]}\sum\nolimits_{n=1}^N\bbE[\tau_{n*}|\bX_n] (\bX_n-\btM_{*})\bSig_{r*}^{-1}(\bX_n-\btM_{*})'\\
	&=&\frac{1}{r \sum\nolimits_{n=1}^N\bbE[\tau_n|\bX_n]}\sum\nolimits_{n=1}^N\bbE[\tau_n|\bX_n] (\bX_n-\btM)\bSig_r^{-1}(\bX_n-\btM)',\\
	\btSig_r&=&\btSig_{r*}=\frac{1}{Nc}\sum\nolimits_{n=1}^N\bbE[\tau_{n*}|\bX_n] (\bX_n-\btM_{*})'(\alpha\btSig_c)^{-1}(\bX_n-\btM_{*})\\
	&=&\frac{1}{c \sum\nolimits_{n=1}^N\bbE[\tau_n|\bX_n]}\sum\nolimits_{n=1}^N\bbE[\tau_n|\bX_n] (\bX_n-\btM)'\btSig_c^{-1}(\bX_n-\btM).
\end{IEEEeqnarray*}
Therefore, the PX-ECME can be represented as a modified ECME except that $\btSig_c$ in \refe{eqn:ecme.Sc} and $\btSig_r$ in \refe{eqn:ecme.Sr} are replaced by \refe{eqn:px-ecme.Sc} and \refe{eqn:px-ecme.Sr}, respectively.

\subsection{Proof for \refp{prop:mvt.weight}}\label{sec:Prop1.proof}
\begin{proof}
	For ML estimate $\btheta$, multiplying \refe{eqn:ecme.Sc} by $\btSig_c^{-1}$ and taking the trace on both sides, we obtain 
	\begin{equation}\label{eqn:mt.cr}
		cr =  \frac{1}{N}\sum\nolimits_{n=1}^N\bbE[\tau_n|\bX_n]\tr\{\bSig_c^{-1}(\bX_n-\bM)\bSig_r^{-1}(\bX_n-\bM)'\}.	
	\end{equation}
	From \refe{eqn:Etau.X}, we have 
	\begin{equation}\label{eqn:Etau.X.trans}
		\nu+cr = \nu\bbE[\tau_n|\bX_n]+\bbE[\tau_n|\bX_n]\tr\{\bSig_c^{-1}(\bX_n-\bM)\bSig_r^{-1}(\bX_n-\bM)'\}.
	\end{equation}
	Taking the sum over $i$ from 1 to $N$ on both sides of \refe{eqn:Etau.X.trans} yields
	\begin{equation}\label{eqn:sum.Etau.X}
		N\nu + Ncr = \nu\sum\nolimits_{n=1}^N\bbE[\tau_n|\bX_n]+\sum\nolimits_{n=1}^N\bbE[\tau_n|\bX_n]\tr\{\bSig_c^{-1}(\bX_n-\bM)\bSig_r^{-1}(\bX_n-\bM)'\}.
	\end{equation}
	Substituting \refe{eqn:mt.cr} into \refe{eqn:sum.Etau.X}, we obtain
	\begin{equation*}
		\frac{1}{N}\sum\nolimits_{n=1}^N\bbE[\tau_n|\bX_n] = 1.
	\end{equation*}
	The proof is concluded.
\end{proof}

\section{Supplementary results on real data}
\subsection{Supplementary results on face datasets}\label{sec:face.cls.supp}
\reff{fig:oidc} and \reft{tab:cls.face} in \refs{sec:cls.face} have given the results for XM2VTS ($\gamma=4$) and PIE ($\gamma=6$). The results for XM2VTS ($\gamma=3$) and PIE ($\gamma=5$) are shown in \reff{fig:oidc.supp} and \reft{tab:cls.face.supp}.
\begin{figure*}[htb]
	\centering \scalebox{0.9}[0.9]{\includegraphics*{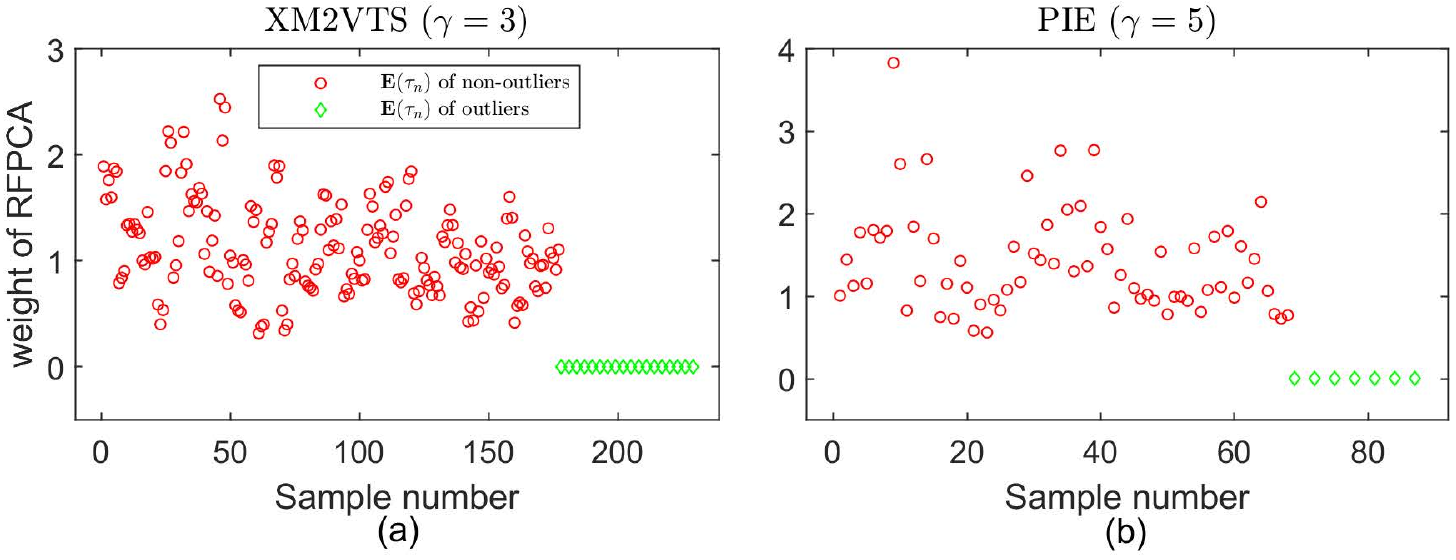}}
	\caption{Weights of non-outliers (circle) and outliers (diamond) by RFPCA on (a) XM2VTS and (b) PIE.} \label{fig:oidc.supp}
\end{figure*}
\begin{table*}[htbp]
	\centering
	\caption{\label{tab:cls.face.supp} The lowest average error rates (mean $\pm$ std) and their corresponding dimensions by different methods on the face datasets. The best method is shown in boldface. $\bullet$ means that RFPCA is significantly better than the method, and $\circ$ means that the difference between RFPCA and the method is not significant, using the Wilcoxon signed-rank test with a $p$-value of 0.05.}
	\resizebox{\linewidth}{!}{
		\begin{tabular}{ccccc}
			\toprule
			\multirow{2}{*}{Method} &  \multicolumn{2}{c}{XM2VTS ($\gamma=3$)}    & \multicolumn{2}{c}{PIE ($\gamma=5$)}         \\ \cmidrule(r){2-3} \cmidrule(r){4-5} 
			&        without outliers        &     with 10\% outliers      &      without outliers      &    with 10\% outliers  \\ \midrule
			RFPCA   &  {\bf15.0}$\pm$2.3(16, 7)      &  {\bf15.0}$\pm$2.3(19, 7)   &{\bf14.4}$\pm$8.1(25, 12)      & {\bf14.3}$\pm$8.0(25, 12) \\ 
			TPCA    &  15.3$\pm$2.4(12, 7)$\circ$    &15.5$\pm$2.4(19, 7)$\bullet$ & 14.4$\pm$8.3(27, 11)$\circ$   & 14.5$\pm$8.3(27, 11)$\circ$\\
			FPCA    &  15.6$\pm$2.3(17, 7)$\bullet$  &16.6$\pm$2.3(17, 11)$\bullet$& 14.9$\pm$8.8(31, 11)$\circ$   & 16.5$\pm$9.4(64, 17)$\bullet$\\
			BPCA    &  21.1$\pm$2.5(25, 51)$\bullet$ &21.2$\pm$2.4(40, 47)$\bullet$& 40.1$\pm$6.8(64, 64)$\bullet$ & 40.1$\pm$6.8(64, 64)$\bullet$ \\
			\emph{t}PCA&               —                &               —             &             —              &     —   \\
			PCA     &  19.3$\pm$2.4(82)$\bullet$     &19.3$\pm$2.1(171)$\bullet$   & 16.7$\pm$10.9(60)$\bullet$    & 16.6$\pm$10.9(94)$\bullet$   \\ \bottomrule
		\end{tabular}
	}
\end{table*} 

\subsection{Supplementary results on MTS datasets}\label{sec:mts.cls.supp}
\reff{fig:mts.oidc} and \reft{tab:cls.MTS} in \refs{sec:expr.mts} have given the results for AUSLAN ($p=4/5$) and ECG ($p=4/5$). The results for AUSLAN ($p=1/6$) and ECG ($p=1/4$) are shown in \reff{fig:mts.oidc.supp} and \reft{tab:cls.MTS.supp}.
\begin{figure*}[htb]
	\centering \scalebox{0.9}[0.85]{\includegraphics*{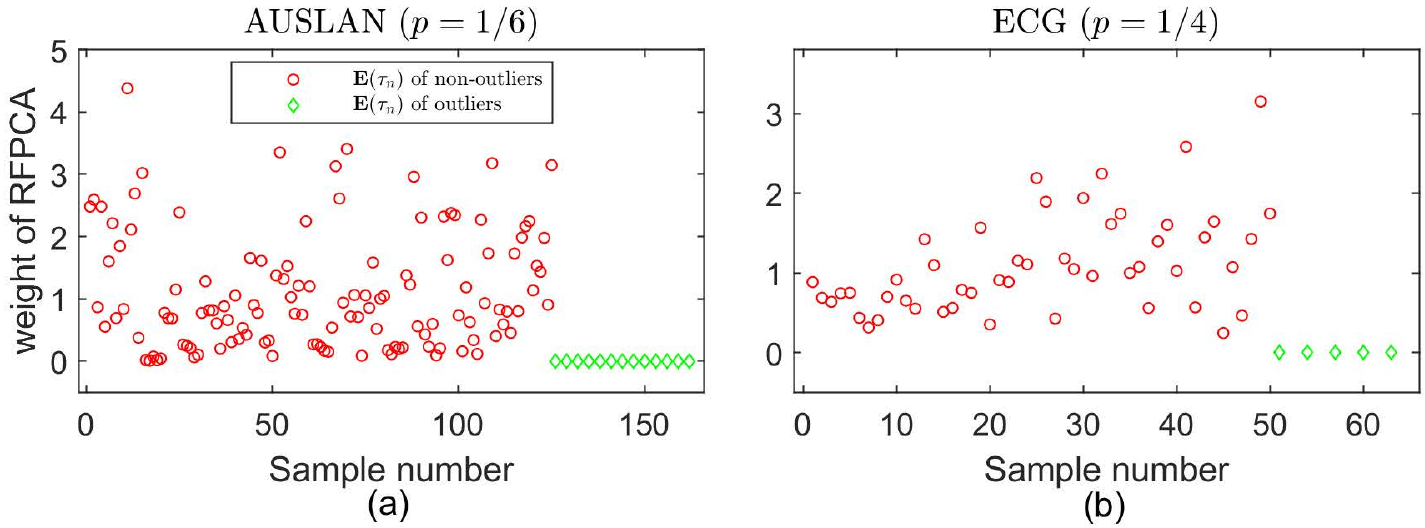}}
	\caption{Weights of non-outliers (circle) and outliers (diamond) by RFPCA on (a) AUSLAN and (b) ECG.} \label{fig:mts.oidc.supp}
\end{figure*}

\begin{table*}[htbp]
	\centering
	\caption{\label{tab:cls.MTS.supp} The lowest average error rates (mean $\pm$ std) and their corresponding dimensions by different methods on the MTS datasets. The best method is shown in boldface. $\bullet$ means that RFPCA is significantly better than the method, and $\circ$ means that the difference between RFPCA and the method is not significant, using the Wilcoxon signed-rank test with a $p$-value of 0.05.}
	\resizebox{\linewidth}{!}{
		\begin{tabular}{ccccc}
			\toprule
			\multirow{2}{*}{Method} &   \multicolumn{2}{c}{AUSLAN ($p=1/6$)}      & \multicolumn{2}{c}{ECG ($p=1/4$)}  \\ \cmidrule(r){2-3} \cmidrule(r){4-5}
			&        without outliers      &     with 10\% outliers      &      without outliers       &    with 10\% outliers  \\ \midrule
			RFPCA      &12.0$\pm$1.7(1, 15)           &{\bf11.4}$\pm$1.5(1, 14)     &{\bf18.4}$\pm$1.6(8, 1)      &{\bf18.1}$\pm$1.5(8, 1)\\
			TPCA       &        —                     &       —                     &18.5$\pm$1.9(8, 1)$\circ$    &19.0$\pm$1.6(8, 1)$\bullet$ \\
			FPCA       &{\bf8.8}$\pm$1.8(1, 15)$\circ$&16.9$\pm$2.1(3, 22)$\bullet$ &18.9$\pm$2.6(8, 1)$\circ$    &19.3$\pm$2.7(18, 1)$\circ$ \\
			BPCA       &12.5$\pm$1.6(1, 20)$\bullet$  &38.2$\pm$3.5(37, 22)$\bullet$&20.5$\pm$1.6(12, 1)$\bullet$ &21.2$\pm$2.3(25, 1)$\bullet$ \\
			\emph{t}PCA    &               —              &                  —          &              —              &   —  \\
			PCA        &21.5$\pm$2.0(36)$\bullet$     &21.6$\pm$2.1(49)$\bullet$    &20.1$\pm$2.3(10)$\bullet$    &19.6$\pm$2.9(20)$\circ$ \\ \bottomrule
		\end{tabular}
	}
\end{table*} 
\end{appendices}
\bibliographystyle{chicago}
\bibliography{journall,RFPCA,jhzhao-pub}
\end{document}